\documentclass[10pt, a4paper]{article}

\usepackage{times}
\usepackage{soul}
\usepackage{url}
\usepackage[hidelinks]{hyperref}
\usepackage[utf8]{inputenc}
\usepackage[small]{caption}
\usepackage{graphicx}
\usepackage{amsmath}
\usepackage{amsthm}
\usepackage{amssymb}
\usepackage{booktabs}
\usepackage{algorithm}
\usepackage{algorithmic}
\usepackage[switch]{lineno}
\usepackage{aligned-overset}
\usepackage{dsfont}
\usepackage{multirow}
\usepackage{enumerate}
\usepackage{appendix}

\newtheorem{theorem}{Theorem}
\newtheorem{defi}{Definition}
\newtheorem{lem}{Lemma}
\newtheorem{rem}{Remark}

\title{ExLipBaB: Exact Lipschitz Constant Computation \\ for  Piecewise Linear Neural Networks}

\author{
    Tom Splittgerber\\
    University of Bremen \\
    \texttt{tsplittg@uni-bremen.de}
}

\begin{document}

\maketitle

\begin{abstract}
	It has been shown that a neural network's Lipschitz constant can be leveraged to derive robustness guarantees, to improve generalizability via regularization or even to construct invertible networks. Therefore, a number of methods varying in the tightness of their bounds and their computational cost have been developed to approximate the Lipschitz constant for different classes of networks. However, comparatively little research exists on methods for exact computation, which has been shown to be NP-hard. Nonetheless, there are applications where one might readily accept the computational cost of an exact method. These applications could include the benchmarking of new methods or the computation of robustness guarantees for small models on sensitive data. Unfortunately, existing exact algorithms restrict themselves to only ReLU-activated networks, which are known to come with severe downsides in the context of Lipschitz-constrained networks. We therefore propose a generalization of the LipBaB algorithm to compute exact Lipschitz constants for arbitrary piecewise linear neural networks and $p$-norms. With our method, networks may contain traditional activations like ReLU or LeakyReLU, activations like GroupSort or the related MinMax and FullSort, which have been of increasing interest in the context of Lipschitz constrained networks, or even other piecewise linear functions like MaxPool.
\end{abstract}

\section{Introduction}
In recent years, various methods have been developed that require the computation or approximation of a neural network's (NN's) Lipschitz constant. As the Lipschitz constant quantifies a NN's maximum change in output to small changes in input, a core application is the quantification of a NN's robustness to adversarial examples \cite{szegedy_intriguing_2014,cisse_parseval_2017,tsuzuku_lipschitz-margin_2018,weng_towards_2018}. Furthermore, it can be used for regularization \cite{gouk_regularisation_2020} and the construction of invertible networks and Residual Normalizing Flows \cite{behrmann_invertible_2019,chen_residual_2019}. Unfortunately, an exact computation of the Lipschitz constant is NP-hard, which was proven in the case of ReLU networks and the $||.||_2$-norm \cite{virmaux_lipschitz_2018}. The majority of research in the field therefore focuses on scalable methods for increasingly tight approximations of the Lipschitz constant \cite{fazlyab_efficient_2019,weng_evaluating_2018,shi_efficiently_2022,pintore_computable_2024}. Newly proposed methods usually come with certain theoretical guarantees and are also benchmarked on example networks, ideally with a known true Lipschitz constant. To our knowledge, there are currently only two approaches that are able to exactly compute this true constant for non-constructed examples: LipMIP \cite{jordan_exactly_2020} and LipBaB \cite{bhowmick_lipbab_2021}. LipMIP assumes that the $||.||_1$ or $||.||_\infty$ vector norms are used to define the Lipschitz constant, proposing a generalization to norms which they call ``linear''. The LipBaB algorithm works for arbitrary $p$-norms. However, both methods are only applicable to NNs using the ReLU activation. In the context of Lipschitz-constrained NNs specifically, this activation is however seen with increasing scepticicsm as authors have proven inapproximabilty results for layerwise constrained ReLU-networks \cite{huster_limitations_2019} and raised issues of diminishing gradients for deep networks \cite{anil_sorting_2019}.\\
In practical applications in which the Lipschitz constant only needs to be computed once, like robustness analyses of NNs working with sensitive (small) data \cite{pauli_lipschitz_2023}, which would benefit from an exact computation, as well as in benchmarks mimicking such applications, we therefore argue that there is a need for more general exact computation algorithms to obtain best possible benchmarks and robustness bounds. Even in cases where other activation functions could theoretically be expressed as ReLU-networks, this would result in a sharp increase in computational cost due to increased network size and existing methods would nonetheless have to be adapted to residual networks \cite{pauli_novel_2024}.
We therefore propose in this paper a generalization of the LipBaB algorithm \cite{bhowmick_lipbab_2021}, which we call \textbf{Ex}tended \textbf{LipBaB} (ExLiBaB) and which is able to compute the exact local and global Lipschitz constant w.r.t. arbitrary $p$-norms for any continuous piecewise linear NN. ExLipBaB is therefore in particular also applicable to networks using the GroupSort activation or the related FullSort and MinMax, which are of particular interest in the field of Lipschitz constraints \cite{anil_sorting_2019,pauli_novel_2024}. Networks may also include general continuous piecewise linear functions like MaxPool or learned spline activations \cite{tao_piecewise_2022,ducotterd_improving_2024}. Like \cite{bhowmick_lipbab_2021}, we leverage the Branch-and-Bound framework to split the input space of piecewise linear NNs into local regions on which the Lipschitz constant can be easily computed. This framework results in an exponential runtime in the worst case but a faster runtime in most practical applications, as not all possible regions will actually have to be evaluated. In cases where a run of our algorithm to completion would be computationally infeasible, it can be stopped at any time and still deliver a hard upper and lower bound for the Lipschitz constant for a highly flexible class of NNs and arbitrary $p$-norms.

\section{Preliminaries}\label{sec: methods}
We first define the \emph{Lipschitz norm}, which is usually referred to as the \emph{Lipschitz constant} \cite{jordan_exactly_2020}:
\begin{defi}
	The Lipschitz norm of a function \\ $f:(\mathbb{R}^d,||.||_p)\to(\mathbb{R}^m ,||.||_q)$ over an open set $\Omega\subset\mathbb{R}^d$ is defined as:
	\begin{equation*}
		L_{p\to q}(f,\Omega) := \sup_{x_1\neq x_2 \in\Omega} \frac{||f(x_1)-f(x_2)||_q}{||x_1-x_2||_p} \quad .
	\end{equation*}
	If this supremum exists and is finite, $f$ is called Lipschitz continuous on $\Omega$.
\end{defi}
For better readability, we will usually omit the subscripts $p, q$ and $p\to q$.\\
If the set $\Omega$ is equal to the entire $\mathbb{R}^d$, this is often referred to as the \emph{global} Lipschitz constant. In the more general case in which $\Omega$ is an arbitrary open subset, $L(f,\Omega)$ is referred to as the \emph{local} Lipschitz constant \cite{jordan_exactly_2020,bhowmick_lipbab_2021}. This should not be confused with the mathematical concept of local Lipschitz continuity. 

Many approximation approaches, like the basic layerwise approximation \cite{gouk_regularisation_2020} or LipSDP \cite{fazlyab_efficient_2019,pauli_novel_2024} only aim at computing the global Lipschitz constant. LipMIP \cite{jordan_exactly_2020} on the other hand is able to exactly compute the local Lipschitz constant w.r.t ``linear'' norms, whereas the LipBaB approach \cite{bhowmick_lipbab_2021} is able to exactly compute the local and global constant w.r.t. arbitrary $p$-norms; both algorithms are however only applicable to ReLU networks.

For our generalization of the LipBaB approach, we assume that all functions contained in our NN are continuous piecewise linear (PWL) \cite{tao_piecewise_2022,gorokhovik_piecewise_1994}:

\begin{defi}\label{def: methods-pwl}
	Let $\Omega\subset \mathbb{R}^{d_1} $, then a function $\alpha: \Omega\to \mathbb{R}^{d_2}$ is called PWL if there exists a set $\{\mathcal{P}_1,\dots, \mathcal{P}_k\}$ of polyhedra (we will always assume convex polyhedra) with interiors $\mathcal{P}_i^\circ$ such that:
	\begin{enumerate}[i)]
		\item $\bigcup_{i=1}^k \mathcal{P}_i = \Omega$ and $\mathcal{P}_i^\circ \neq\emptyset\; \forall i =1,\dots,k$
		\item $\mathcal{P}_i^\circ \cap \mathcal{P}_j^\circ = \emptyset$ for all $i\neq j$
		\item For all $i=1,\dots,k$, the function $f$ restricted to $\mathcal{P}_i$,  is an affine function, i.e. there exist $T^i\in\mathbb{R}^{d_2\times d_1}, t^i\in\mathbb{R}^{d_2}$ such that $f|_{\mathcal{P}_i}(x) = T^ix+t^i$.
	\end{enumerate}
\end{defi}
In the following sections, we will always assume that we are working with a neural network $f:\mathbb{R}^{d_0}\to \mathbb{R}^{d_L}$ of the form:
\begin{equation}\label{eq: methods-formNN}
	f(x) = \alpha_L \circ \mathfrak{W}_L \circ \alpha_{L-1} \dots \circ \alpha_1 \circ \mathfrak{W}_1(x),
\end{equation}
where each $\mathfrak{W}_i$ is an affine function $\mathfrak{W}_i(x) = W_i(x)+ w_i$ corresponding to a linear layer and each $\alpha_i$ is a continuous PWL function. One can easily see that the assumption that the linear and PWL layers alternate comes without loss of generality, as we can simply either pad layers with the identity function or join layers (concatenations of affine/PWL functions are again affine/PWL). \\
NNs based on PWL activations are commonplace in modern machine learning. An overview over the concept can be found in \cite{tao_piecewise_2022}. We prove in the supplementary material that componentwise PWL activations like ReLU, LeakyReLU or ParametricReLU (with a slope learned during training) as well as GroupSort (and the related FullSort and MinMax) fulfill definition \ref{def: methods-pwl}. A similar argument for the MaxPool function can be found in \cite{gehr_ai_2018}. \\
For the ReLU activation in $d$ dimensions, a full decomposition of the input space into linear regions would have $2^d$ polyhedron elements. For use in our algorithm, we will therefore use an equivalent decomposition, where the activation state is only fixed for a subset of neurons in each region. This transfers the approach of \cite{bhowmick_lipbab_2021} into the general PWL setting:
\begin{rem}\label{rem: methoids-pwl2}
	Let $\Omega\subset \mathbb{R}^{d_1} $, then a function $\alpha: \Omega\to \mathbb{R}^{d_2}$ is called PWL if there exists a set $\{\mathcal{Q}_1, \ldots, \mathcal{Q}_{\tilde{k}}\}$ of polyhedra such that for every neuron $n\in 1,\dots,d_2$, there exists a subset $\mathcal{I}_n\subset\{1,\dots\tilde{k}\}$, such that:
	\begin{enumerate}
		\item $\bigcup_{i\in\mathcal{I}_n} \mathcal{Q}_i = \Omega$
		\item for all $i\in\mathcal{I}_n$, there exist $T^{ni}\in\mathbb{R}^{1\times d_1}, t^{ni}\in\mathbb{R}$, such that $\pi_n \circ f|_{\mathcal{Q}_i}(x) = T^{ni}(x)+t^{ni}$ is an affine function , where $\pi_n$ is the projection onto the $n$-th vector element. 
	\end{enumerate}
\end{rem}

\subsection{Lipschitz constant and Jacobian}

It is well known that the Lipschitz constant of a linear function $A(.)+b:(\mathbb{R}^d, ||.||_p)\to(\mathbb{R}^m, ||.||_q)$ equals the induced matrix norm $||A||_{p\to q}:=\max_{x\neq 0} \frac{||Ax||_q}{||x||_p}$.
Such matrix norms are comparatively easy to compute for common $p$ and $q$ \cite{johnston_how_2016}. Our goal is therefore to reduce the computation of $L(f, \Omega)$ to computing the maximum over all matrix norms in the PWL decomposition of $f$:
\begin{theorem}\label{thm: methods-maxOverPoly}
	Let $\alpha:(\mathbb{R}^{d_1}, ||.||_p)\to (\mathbb{R}^{d_2}, ||.||_q)$ be a PWL function.  Using the notation from definition \ref{def: methods-pwl}, the following holds:
	\begin{equation}\label{eq: methods-maxOverPoly}
		L_{p\to q}(\alpha,\Omega) = \max_{i = 1,\dots, k} ||T_i||_{p\to q} \quad.
	\end{equation}
\end{theorem}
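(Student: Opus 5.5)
We prove the two inequalities separately. Throughout, $\Omega$ is open and $\alpha$ is continuous; both are standing assumptions of the paper, and we make them explicit since the argument uses them.

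For the lower bound $L(\alpha,\Omega)\ge \max_i \|T_i\|_{p\to q}$, fix $i$. By Definition~\ref{def: methods-pwl} i), the interior $\mathcal{P}_i^\circ$ is nonempty, so we can pick $x_0\in\mathcal{P}_i^\circ$ and a ball $B_\varepsilon(x_0)\subset\mathcal{P}_i$. Let $v$ be a maximizer of $\|T_iv\|_q/\|v\|_p$; it exists by compactness of the unit sphere in finite dimensions. Set $x_1=x_0$ and $x_2=x_0+tv$ with $t>0$ small enough that $x_2\in B_\varepsilon(x_0)$. By iii), $\alpha(x_2)-\alpha(x_1)=tT_iv$, so the difference quotient equals $\|T_i\|_{p\to q}$. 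Hence the supremum is at least $\|T_i\|_{p\to q}$ for every $i$.

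For the upper bound, take any $x_1\neq x_2\in\Omega$ and let $M=\max_i\|T_i\|_{p\to q}$.

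\textbf{Convex case.} First assume the segment $S=\{x_1+s(x_2-x_1):s\in[0,1]\}$ lies in $\Omega$, which holds for example when $\Omega=\mathbb{R}^d$ or $\Omega$ is convex. Each $\mathcal{P}_j\cap S$ is convex, so it is a (possibly degenerate or empty) subinterval of $S$. Since there are only finitely many polyhedra and they cover $\Omega\supset S$, we get a partition $0=s_0<s_1<\dots<s_r=1$ such that each closed subsegment $[y_{l-1},y_l]$, with $y_l=x_1+s_l(x_2-x_1)$, lies in a single polyhedron $\mathcal{P}_{j_l}$. Degenerate intersections are single points and can be absorbed, because finitely many intervals cover $[0,1]$. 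On that subsegment $\alpha$ is affine with matrix $T_{j_l}$. Here the polyhedra being closed, together with the continuity of $\alpha$, ensures the formula holds at the endpoints, so that adjacent pieces agree at shared breakpoints. The triangle inequality then gives
\[
\|\alpha(x_2)-\alpha(x_1)\|_q\le\sum_l \|T_{j_l}(y_l-y_{l-1})\|_q\le M\sum_l\|y_l-y_{l-1}\|_p=M\|x_2-x_1\|_p,
\]
where the last equality holds because the points $y_l$ are collinear and ordered along the segment. This proves the claim for convex $\Omega$, and in particular for the global constant.

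\textbf{Main obstacle: nonconvex $\Omega$.} If $\Omega$ is not convex, the segment between $x_1$ and $x_2$ can leave $\Omega$. Then the equality is in general false: with a C-shaped domain and a piecewise linear function that "winds around" it, the difference quotient across the gap can exceed every local slope. I would therefore state the result for convex $\Omega$, which covers the intended uses: $\mathbb{R}^d$ and the box or ball neighbourhoods used for local constants. Alternatively, one can read Definition~\ref{def: methods-pwl} as describing the restriction of a globally PWL $\alpha$ and take the maximum over all pieces meeting the convex hull of $\Omega$. The remaining technical step is the finite-partition argument for the segment, which follows from finiteness of the cover and convexity of each $\mathcal{P}_j$.
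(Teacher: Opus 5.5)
Your proof is correct, and it takes a different route from the paper's. The paper works through the Clarke generalized Jacobian. It quotes the Jordan--Dimakis identity $L(\alpha,\Omega)=\sup_{G\in\delta_\alpha(\Omega)}\|G\|_{p\to q}$, then uses Bauer's maximum principle to reduce to extreme points of $\delta_\alpha(x)$. These are limits of Jacobians at differentiable points, and hence lie among the $T_j$. In the paper, the inequality $\ge$ is left implicit, and the Lipschitz continuity needed to invoke the cited theorem is announced but never argued.

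You prove both inequalities directly. For $\ge$ you use an interior point of $\mathcal{P}_i$ with a norm-attaining direction; for $\le$ you use a finite subdivision of $[x_1,x_2]$ and telescoping. You do not need closed polyhedra at the breakpoints: the affine formula holds on each open subsegment, and continuity extends it to the endpoints. This matters, because the supplement itself uses half-open pieces.

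The paper's route is short once the citation is granted, and it matches the Jacobian viewpoint reused in the bounding lemma. Yours is self-contained and yields Lipschitz continuity as a byproduct. It also works for every convex $\Omega$, open or not, including the regions $\{Cx\le c\}$ that the algorithm actually uses. Finally, it exposes the convexity hypothesis that the paper's proof hides inside the citation.

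Your caveat about nonconvex $\Omega$ is a real restriction, not a technicality. Take $\Omega=(0,3)^2\setminus([1,3)\times[1,2])$ with pieces $(0,3)\times(0,1)$, $(0,1)\times[1,2]$, $(0,1)\times(2,3)$, $[1,3)\times(2,3)$, carrying $3-x$, $2-x+y$, $4-x$, $2+x$ respectively. This function is continuous and PWL in the paper's sense, and every gradient has Euclidean norm at most $\sqrt2$. Yet the points $(2.5,0.5)$ and $(2.5,2.5)$ give a difference quotient of $2$ for $p=q=2$. So both the statement and the Jordan--Dimakis theorem as quoted in the supplement (for arbitrary open sets) fail without convexity.

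If polyhedra are required to be closed, the issue disappears for the literal statement. A nonempty open set that is a finite union of closed polyhedra is all of $\mathbb{R}^{d_1}$, so your convex case then covers the statement as written.

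One small correction: your alternative of maximizing over the pieces that meet $\mathrm{conv}(\Omega)$ only gives an upper bound on $L(\alpha,\Omega)$, not equality.
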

The proof of this theorem is very similar to the arguments used in \cite{bhowmick_lipbab_2021} and therefore not discussed in detail here - we instead refer to the supplementary material. Essentially, we use a result from \cite{jordan_exactly_2020} to relate the Lipschitz constant of $f$ to the supremum over the norm of its generalized (Clarke) Jacobian over $\Omega$. Then, we show that only differentiable points need to be considered for this supremum, which are exactly the points in the interior of polyhedra $\mathcal{P}_i$ in the notation of definition \ref{def: methods-pwl}.

\section{Methodology}	
In theorem \ref{thm: methods-maxOverPoly}, we have reduced the computation of $L(f,\Omega)$ to the discrete optimization problem $\rho_I: \max_{i \in \mathcal{S}} ||T_i||$, with $\mathcal{S} := \{1,\dots, k\}$ the finite index set of all polyhedra $\mathcal{P}_i$ in the PWL decomposition of the NN $f$. Like \cite{bhowmick_lipbab_2021}, we follow the standard branch-and-bound (BnB) framework to recursively solve this problem. A high-level overview of our algorithm is given in listing \ref{alg: simplifiedMain}.\\
At the core of our algorithm is the set $\varrho$ of active subproblems, each representing a discrete optimization problem over a subset $\mathcal{S}_\rho\subseteq \mathcal{S}$. We will impose that the input space associated with each $\rho\in\varrho$ in $\mathbb{R}^{d_0}$ is always a polyhedron. We follow \cite{lee_first_2004}, and describe the three main components of our BnB algorithm:
\begin{enumerate}
	\item \emph{Bounding} $\rho\in\varrho$ with good upper bounds, which is described in section \ref{sec: lipBounds}
	\item \emph{Branching} a subproblem $\rho\in\varrho$ into new subproblems $\rho'$ with $\mathcal{S}_{\rho'}\subseteq \mathcal{S}_\rho$, such that an optimal solution for any $\rho'$ is also an optimal solution to $\rho$, see section \ref{sec: methods-branch}
	\item An efficient method to obtain \emph{lower bounds}, see section \ref{sec: symProp}
\end{enumerate}

\begin{algorithm}[t]
	\caption{Overview ExLipBaB}
	\hspace*{\algorithmicindent} \textbf{Input:}
	Network $f$, input polyhedron $\Omega$, approximation factor $\theta$\\
	\hspace*{\algorithmicindent} \textbf{Output: }
	Tuple (glb, gub); global lower- and upper bound for $L(f, \Omega)$ where gub $\leq \theta\cdot$ glb	
	\begin{algorithmic}[1]
		\STATE initial activation pattern  $\leftarrow $ SymProp($\Omega$, $f$)
		\STATE initial subproblem $\rho_I \leftarrow$ SubProblem($\Omega, \dots$)
		\STATE $\rho_I.L_{ub} \leftarrow$ LipschitzBounds($\rho_I$, $\dots$)
		\STATE initialize glb, gub $\leftarrow (0, \rho_I.L_{ub})$
		\STATE initialize: $\varrho \leftarrow $ maxHeap($[\rho_I]$)
		\IF{$\tilde{l}$ == L}
		\STATE glb $\leftarrow \rho_I.L_{ub}$  \hspace{0.3cm}  //trivial case
		\ENDIF
		\WHILE{gub $> \theta\cdot $ glb}
		\STATE $\rho' \leftarrow \arg\max_{\rho\in\varrho}(\rho.L_{ub})$
		\STATE $\varrho \leftarrow \varrho\backslash\{\rho'\}$
		\STATE (new\_subproblems, glb) $\leftarrow$ Branch($\rho'$, $\dots$)
		\STATE $\varrho \leftarrow (\varrho \; \cup $ new\_subproblems )
		\STATE gub $\leftarrow \max_{\rho\in\varrho} \rho.L_{ub}$
		\ENDWHILE
		\RETURN (glb, gub)
	\end{algorithmic}
	\label{alg: simplifiedMain}	
\end{algorithm}
The set $\varrho$ is initialized as $\varrho= \{\rho_I\}$, where the bounding of $\rho_I$ is describes in section \ref{sec: symProp}. Out algorithm then maintains a global upper bound (gub) and global lower bound (glb) over all subproblems. In the recursive step, the subproblem $\rho\in\varrho$ with the largest upper bound is removed from $\varrho$ and branched into subproblems associated with smaller regions of the input space. For each smaller subproblem $\rho'$, an upper bound is computed and the gub is updated accordingly. Also, if the upper bound is smaller than the glb, we discard $\rho'$ (\emph{fathomed by bounds}). If we can solve $\rho'$ exactly, we also return a lower bound $LB$ and update the glb as $\max(\text{glb}, LB)$; afterwards, we again discard $\rho'$ (\emph{fathomed by optimality}). If a $\rho'$ is not discarded by either of these rules, it is appended into $\varrho$. The algorithm runs until all subproblems are fathomed or until $\gamma\cdot \text{glb} > \text{gub }$ for a pre-determined approximation factor $\gamma$. Either way, termination is guaranteed and if the algorithm is run to completion, the true maximum is found \cite{lee_first_2004,gendron_parallel_1994}.
In the following sections, we describe the components of our algorithm in more detail.

\subsection{Abstract interpretation}\label{sec: abstractInterp}
At creation, each subproblem $\rho$ is assigned an associated polyhedron $\Omega_\rho\subset\Omega$ in input space. We consider $\rho$ exactly solvable, if $f$, constrained to $\Omega_\rho$, is fixed linear, in which case we only have to compute a single matrix norm to solve $\rho$.\\
To determine whether $f$ is fixed linear on $\Omega_\rho$, we compute a list of neurons in $f$ which we call $\star$-neurons, which potentially take different activation states in $\Omega_\rho$. For this, we leverage the concept of abstract interpretation \cite{gehr_ai_2018}. Given a layer index $1\leq l \leq L$, we wish to find an \emph{abstract transformer} $f_l^\#$ that transforms the polyhedron $\Omega_\rho$ of possible inputs into $f$ to a $d_l$-dimensional polyhedron $f_l^\#(\Omega_\rho)$ of potentially possible pre-activation states in the $l$-th layer. We can then intersect $f_l^\#(\Omega_\rho)$ with all polyhedra in the PWL decomposition of $\alpha_l$. If for a neuron $n$, there are at least two polyhedra with non empty intersection with $f_l^\#(\Omega_\rho)$, such that the activation state of $n$ on these polyhedra is different, then we call $n$ a $\star$-neuron.\\
An abstract transformer $f_l^\#$ can simply be constructed by concatenating abstract transformers of the individual layers of $f$. For linear layers, the linear transformation of a polyhedron automatically is a polyhedron. For PWL layers $\alpha_m$, the polyhedron of potential pre-activations is intersected with all polyhedra of the PWL decomposition of $\alpha_m$. Then, each intersection is transformed with the corresponding fixed linear state of $\alpha_m$ and finally a polyhedron is constructed that is a superset of all these smaller transformed polyhedra. This process is described in more detail in \cite{gehr_ai_2018} (note that what the authors call ``CAT'' functions is simply a subclass of PWL functions). 

For each layer, we record the minimum and maximum value of the activation weights and biases for each neuron over all non-empty intersections in an interval matrix $[\Lambda]_l$ and and interval vector $[\lambda]_l$ respectively. In other words, these interval matrices (vectors) contain at each entry an interval of plausible values for the activation state at the given entry (see e.g. \cite{rump_fast_2012}). We also record with $\tilde{l}$ the first layer of $f$ in subproblem $\rho$ which contains $\star$-neurons.

In abstract interpretation, one often restricts the type of abstract sets to be boxes or zonotopes instead of polyhedra. This drastically improves performance at the cost of looser bounds, which is why the initialization algorithm which we describe in section \ref{sec: symProp} uses the box region for all newly added abstract variables. Note that the tightness of the bounds in the initial symbolic propagation only affects initialization and does not alter the exactness of the final result.
\subsection{Initialization}\label{sec: symProp}
To initialize the set of subproblems $\varrho$, an activation pattern and an upper bound have to be computed for the initial subproblem and an initial glb has to be specified. In \cite{bhowmick_lipbab_2021}, the initial glb is simply set to zero and updated only when a subproblem is solved exactly. We instead propose to evaluate the norm of the Jacobian of $f$ in a few randomly chosen points in $\Omega$. This is computationally inexpensive and, because of theorem \ref{thm: methods-maxOverPoly}, results in a hard lower bound if $f$ is differentiable in all chosen points. Theoretically, the probabilty of accidentally choosing a non-differentiable point is zero \cite{jordan_exactly_2020}; in practice, due to floating point inaccuracy, it is only very small. However, the case of accidentally selecting a non-differential point could easily be prevented algorithmically. We test in section \ref{sec: appl-lowerBound} the implications of using a non-trivial lower bound initialization in practice.

For the computation of the initial activation pattern, we adapt the SymProp algorithm from \cite{bhowmick_lipbab_2021} to the general PWL setting. This algorithm makes use of symbolic propagation \cite{li_analyzing_2019} and does not propagate explicit polyhedra though $f$ as was described in section \ref{sec: abstractInterp}. Instead, it propagates linear relations whenever possible. This approach can help to reduce the overapproximation resulting from the well-known dependeny problem. The idea is to represent the state of a neuron as an abstract linear equation of input variables if the activation state of $f$ is fixed linear on that neuron. If the activation state of the neuron isn't fixed, a new abstract input variables is introduced to preserve linear relations in deeper layers. We detail the propagation through the first layer, the other layers follow recursively:

For $f$, we assume the notation of equation \eqref{eq: methods-formNN}. The input is represented as a $d_0$-dimensional symbolic variable that takes values in a polyhedron $\Omega_\rho$. Let $C, c$ be the half space representation of this polyhedron, i.e. $\Omega_\rho = \{x\in\mathbb{R}^{d_0} : Cx \leq c\}$ and define define $\widetilde{B}:= W_1$, $\widetilde{b} := w_1$. Denote also by $A_i, a_i$ the half space constraints of the $i$-th polyhedron $\mathcal{P}_i$ in the PWL decomposition of $\alpha_1$ according to remark \ref{rem: methoids-pwl2}. Formally, all of these variables would need an additional index because they are iterated upon in the recursion. However, to improve readability, we will slightly abuse notation and omit these indices. We then first determine, which activation states $\alpha_1$ may possibly take. For this, we set 
\begin{equation*}
	\widetilde{A}_i:= \text{stack}(A_i \cdot \widetilde{B}, C) \quad,\quad \widetilde{a}_i:= \text{stack}(a_i - A_i\cdot \widetilde{b}, c) \quad,
\end{equation*}
where stack$(.)$ represents the vertical stacking of two matrices.\\
It holds that
\begin{equation}\label{eq: symprop-intersectEquiv}
	\mathfrak{W}_1(\Omega_\rho)\cap \mathcal{P}_i \neq \emptyset \iff \{x\in\mathbb{R}^{d_0}: \widetilde{A}_i\cdot x \leq \widetilde{a}_i\} \neq \emptyset.
\end{equation}
Let $\mathcal{I}$ be the set of indices of the polyhedra that fulfill either side of equation \eqref{eq: symprop-intersectEquiv}. The candidate set of $\star$-neurons in the current layer is then the set of neurons whose state is determined in at least two $\mathcal{P}_i, i\in\mathcal{I}$. In order to reduce the number of unnecessary splits, we however only classify a neuron as $\star$-neuron, if the activation states differ in these polyhedra. Using the notation from remark \ref{rem: methoids-pwl2}, we demand that either $T^{ni}$ or $t^{ni}$ differ between at least two polyhedra. This check is necessary for certain decompositions and less so for others; for example, in the full PWL decomposition of a $d$-dimensional ReLU activation according to definition \ref{def: methods-pwl}, any neuron is fixed active in $2^{d-1}$ polyhedra whereas we describe an alternative decomposition in the supplementary material where it is only fixed active in exactly one polyhedron.

Let $\mathcal{J}$ be the index set of $\star$-neurons at the current layer. For a neuron $n\notin\mathcal{J}$, we denote by $T^{ni}$ and $t^{ni}$ the $n$-th row and $n$-th bias element of the unique activation state of $\alpha$ on $n$ respectively. To propagate the symbolic linear relations on non-$\star$-neurons though $\alpha_1$, we can simply apply these affine functions. For $\star$-neurons however, we can no longer maintain linear relationships. Like \cite{li_analyzing_2019} suggests, we therefore concretize the post-activation bounds of these neurons and add new symbolic variables to the input set that take values within these bounds. Thereby, we hope to maintain linear relations in following layers.\\
Let $j\in\mathcal{J}$ and denote by $\min(j),\max(j)$ the minimal and maximal post-activation value of this $\star$-neuron. We then form $\widehat{\Omega}_\rho := \Omega_\rho \times\big($\LARGE$\times$\normalsize$_{j\in\mathcal{J}} [\min(j),\max(j)]\big)$, which is a polyhedron of dimensionality $\widehat{d}_0 := d_0 + |\mathcal{J}|$. 

Let $T$ be the $d_1\times d_0$ matrix and $t$ be the $d_1$-dimensional vector, whose rows are zero for $\star$-neuron indices and equal to $T^{ni}$, respectively $t^{ni}$ for neurons with a unique activation state. 
We then define a new vector $\widehat{b}$ of length $d_1$ and a matrix $\widehat{B}$ of shape $d_1\times\widehat{d}_0$, by setting $\widehat{b} := T \cdot \widetilde{b} + t$ and, for $n, j\in d_1\times\widehat{d}_0$:
\begin{equation}
	\widehat{b}_{nj} := \begin{cases}
		(T \cdot \widetilde{B})_{nj}&, \text{ if }j\leq d_0  \\
		\mathds{1}_{i = (j-d_0)}&, \text{ else}
	\end{cases}
\end{equation}
It is easy to see that the propagation via these matrices is an overapproximation of the propagation by $\alpha_1$ in the sense that
$\alpha_1\circ\mathfrak{W}_1(\Omega_\rho) \subset \widehat{B}\cdot \widehat{\Omega} + \widehat{b}$.
In the previously mentioned slight abuse of notation, we now close the recursive loop by renaming 
\begin{equation*}
	\Omega_\rho:= \widehat{\Omega}_\rho, \quad d_0:=\widehat{d}_0, \quad \widetilde{B}:=W_2\cdot\widehat{B}, \quad \widetilde{b}= W_2\cdot\widehat{b}+w_2.
\end{equation*}
While recursively continuing this propagation through the entire network, we save the indices and layers of the $\star$-neurons and the layer $\tilde{l} \in \{1,\dots L\}$ in which the first star neuron appears.
We also compute two lists of interval matrices: $[\Lambda]$ and $[\lambda]$. They contain, for each layer, an interval matrix and an interval vector which each contain as their entries an interval defined by the maximum and minimum entries over all possible linear activation states $T^{ni}$ and $t^{ni}$. These interval matrices are necessary for section \ref{sec: lipBounds}.
Note that, unlike in \cite{bhowmick_lipbab_2021}, the interval matrices we obtain in our algorithm are not necessarily diagonal matrices.

\subsection{Lipschitz Bounds}\label{sec: lipBounds}
%\subsection{LinProp}\label{sec: LinProp}
%Assume we have a subproblem $\rho\in\varrho$ with an activation pattern $A$ as computed by SymProp. Let $\tilde{l}$ be the first layer in which $A$ contains a $\star$-neuron. We want to split $\rho$ into sub problems such that the state of this neuron is decided. We do know how to partition the space $\mathbb{R}^{n_{\tilde{l}}}$ into polyhedra in such a way that, if the pre-activation bounds $ze^l$ are constrained to one polyhedron, their activation is linear on that polyhedron. In order to transfer this into half-space constraints on the input space, we use the fact that $A$ has no $\star$-neurons in layers before $\tilde{l}$:
%\begin{align}
%	ze^l &= W^l \cdot xe^{l-1} + b^l = W^l \cdot A^{l-1} \cdot (W^{l-1}\cdot ze^{l-1} + b^{l-1}) + b^l \\
%	&= \underbrace{W^l \cdot A^{l-1} \cdot W^{l-1}\ldots \cdot W^1}_{=:\widetilde{W}} ze^0 + \underbrace{\sum_{i=1}^{l} \left( \prod_{j=i}^{l-1} W^{j+1} \cdot A^{j} \right) \cdot b^i}_{=:\widetilde{b}} ,
%\end{align}
%where we slightly abuse notation to denote by ``$\prod$'' a right-sided matrix-product and set $\prod_{j=l}^{l-1} =:$ Id. Both $\widetilde{W}$ and $\widetilde{b}$ are concrete (i.e. non-interval) matrices. This allows us to easily transfer half-space constraints from $ze^l$ to $ze^0$.

%Needed input:
%\begin{itemize}
%	\item SubProblem $\rho$
%	\item list of weights, biases
%\end{itemize}

To compute the upper Lipschitz bound for an individual subproblem $\rho$ with associated region $\Omega_\rho$, we first assume that the activation state of $f$ on $\Omega_\rho$, in the form of $\widetilde{l}$, as well as $[\Lambda]$ and $[\lambda]$, has already been computed. These computations are either done in SymProp of FFilter.\\
As $\widetilde{l}$ is the first layer index with $\star$-neurons, the function $\widetilde{f}:=(\mathfrak{W}_{\widetilde{l}}\circ\alpha_{\widetilde{l}-1}\circ\dots\circ \mathfrak{W}_1)|_{\Omega_\rho}$ is linear (where we define $\mathfrak{W}_{L+1} =$Id). We can therefore easily compute the Jacobian $J_{\widetilde{f}}$ of $\widetilde{f}$ through a simple matrix product.

If $\widetilde{l}=L+1$, i.e. if the network contains no $\star$-neurons on $\Omega_\rho$, then $f = \widetilde{f}$ and we simply return $||J_{\widetilde{f}}||$. 

If $\widetilde{l}\leq L$, we leverage interval matrix multiplication to construct a matrix which is in each component absolutely larger than $J_f$.
In interval matrix multiplication, the product of two interval matrices $[A], [B$] is again an interval matrix which, as its components, contains intervals that overapproximate all possible results $A\cdot B$, where $A$ and $B$ have entries from the intervals of $[A]$ and $[B]$. This product can be computed through a brute force combinatorial approach which is intuitive but lengthy, which is why we refer to \cite{diep_efficient_2012} for a more comprehensive introduction.

We now define an initial interval matrix $[J_{\widetilde{l}}]:=[J_{\widetilde{f}}, J_{\widetilde{f}}]$ and recursively compute 
\begin{equation}
	[J]_{l+1} = [\Lambda]_l\cdot [W_l, W_l] \cdot [J_l]
\end{equation} 
for $l = \widetilde{l}+1, \dots, L$. Notice that, even though we allow both our linear and activation layers to include biases, their presence does not matter for the computation of the (approximation of the) Jacobian.
Finally, we set $U$ as the $(d_L\times d_0)$-matrix with entries $U_{ij}:= \max(|\underline{[J_{ij}]_L}|, |\overline{[J_{ij}]_L}|)$. 

If now $x \in \Omega_\rho$ is an arbitrary point in which $f$ is differentiable, then there exists a polyhedron $\mathcal{P}\subset \Omega_\rho$ on which $f$ is fixed linear, such that $x\in\mathcal{P}^\circ$. Then, for each layer $l$, the PWL function $\alpha_l$ takes a fixed affine form $A_lx+b_l$ with $A\in[\Lambda]_l$, where we slightly abuse notation to denote by ``$\in$'' a realization of the interval matrix. Then, due to the properties of interval matrix multiplication, 
\begin{equation}
	J_f(x) = A_L\circ\dots \circ W_1 \in [\Lambda]_{L}\circ\dots\circ [W_1,W_1] = [J_{L}].
\end{equation}
Therefore, by definition of $U$, it holds for all $i,j$ that $J_f(x)_{ij}\leq U_{ij}$.

The following lemma therefore proves that $||U||$ is a hard upper bound for $L(f,\Omega_\rho)$:
\begin{lem}[{\cite[Lemma 2]{bhowmick_lipbab_2021}}]\label{lem: bound-lipBabLem}
	Let $U$ be a matrix of the same size as the Jacobian $J_f(x)$ of a PWL function $f$. If $U$ is such that $\sup_{x\in\Omega} |J_f(x)_{ij}| \leq U_{ij}$ for all $i,j$ and all points $x$ in which $f$ is differentiable, then $L_{p\to q}(f, \Omega) \leq ||U||_{p\to q}$.
\end{lem}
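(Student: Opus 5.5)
By Theorem \ref{thm: methods-maxOverPoly}, $L_{p\to q}(f,\Omega)=\max_i ||T_i||_{p\to q}$, where the $T_i$ are the linear parts of $f$ on the polyhedra $\mathcal{P}_i$ of its PWL decomposition (restricted to $\Omega$). For every $i$, pick a point $x\in\mathcal{P}_i^\circ$; this is possible because the interiors are non-empty by Definition \ref{def: methods-pwl}. At such a point $f$ is differentiable and $J_f(x)=T_i$. The hypothesis then gives $|(T_i)_{jk}|\le U_{jk}$ for all entries. It therefore suffices to prove the following purely matrix-theoretic monotonicity statement: if $A$ and $U$ have the same size and $|A_{jk}|\le U_{jk}$ entrywise, then $||A||_{p\to q}\le ||U||_{p\to q}$. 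Taking the maximum over $i$ then yields the lemma.

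For the monotonicity statement I use the definition $||A||_{p\to q}=\max_{x\neq 0} ||Ax||_q/||x||_p$. Fix any $x$, and let $|x|$ denote its componentwise absolute value. By the triangle inequality, componentwise
\begin{equation*}
	|(Ax)_j| = \Big|\sum_k A_{jk}x_k\Big| \le \sum_k |A_{jk}|\,|x_k| \le \sum_k U_{jk}|x_k| = (U|x|)_j .
\end{equation*}
Here the last inequality uses $U_{jk}\ge |A_{jk}|\ge 0$. Since $p$-norms are absolute and monotone (if $0\le |y_j|\le z_j$ for all $j$, then $||y||_q\le ||z||_q$), this gives $||Ax||_q\le ||U|x|||_q$. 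Moreover, $|| |x| ||_p=||x||_p$. Hence
\begin{equation*}
	\frac{||Ax||_q}{||x||_p}\le \frac{||U|x|||_q}{|| |x| ||_p}\le ||U||_{p\to q}.
\end{equation*}
Taking the supremum over $x\neq 0$ proves the claim.

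The only delicate point is making sure the reduction covers every piece. Theorem \ref{thm: methods-maxOverPoly} requires the maximum over all polyhedra, and each polyhedron must contain a differentiable point with Jacobian $T_i$. This holds because on the open interior $f$ coincides with the affine map $T_ix+t_i$. If $\Omega$ is only a subset, I apply the theorem to $f$ restricted to $\Omega$, using the decomposition obtained by intersecting with $\Omega$. Pieces whose intersection with $\Omega$ has empty interior are discarded; they do not contribute to the supremum because $f$ is continuous. Beyond this bookkeeping, the monotonicity of induced norms under entrywise domination by a nonnegative matrix is the key step, and it is elementary.
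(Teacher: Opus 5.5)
Your proof is correct and follows the paper's route. You reduce via Theorem~\ref{thm: methods-maxOverPoly} to the finitely many linear pieces $T_i$, observe that each is entrywise dominated by $U$, and conclude with the monotonicity of $\|\cdot\|_{p\to q}$ under entrywise domination, proved by the same $|x|$ argument as the paper's helper Lemma~\ref{lem: matrixDominanceNorm}. The differences are cosmetic: you bound the ratio for every $x$ and take the supremum rather than using a norm-attaining maximizer $x^*$, and you spell out the step the paper leaves implicit, namely that each $T_i$ is the Jacobian at an interior point and is therefore covered by the hypothesis.
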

In \cite{bhowmick_lipbab_2021}, this lemma is only proven for $p=q$. However, as their proof easily transfers to the general case, we only discuss it in the supplementary material.

\subsection{Branching}\label{sec: methods-branch}

After having described the bounding step, we now describe the branching step of the recursion in more detail. At a given iteration, given the current set of subproblems $\varrho$, we first remove the subproblem $\rho$ with the maximal upper bound from $\varrho$. We assume that $[\Lambda], [\lambda]$ and $\widetilde{l}$ have been computed for $\rho$ in an earlier iteration and that $\rho$ is associated with a polyhedron $\Omega_\rho$. As $\rho$ will also not be solved, we can get the index $n$ of the first $\star$-neuron in layer $\widetilde{l}$.
We then iterate through all polyhedra $\mathcal{P}_i$ in the PWL decomposition of $\alpha_{\widetilde{l}}$ with respect to neuron $n$ as defined in remark \ref{rem: methoids-pwl2}.
Similar to our approach in equation \eqref{eq: symprop-intersectEquiv}, we stack the half-space constraints of $\Omega_\rho$ with the constraints of $\mathcal{P}_i$, propagated back to the input dimension, to obtain a new polyhedron $\mathcal{Q}_i$. This propagation is possible because, on $\rho$, $f$ is fixed linear in all layers before $\widetilde{l}$. If the intersection defined by the stacking of constraints is not empty, we initialize a new subproblem $\rho_i$ associated with $\mathcal{Q}_i$ in input space.
Since $\mathcal{Q}_i\subset \Omega_\rho$, we know that the Lipschitz constant of each new subproblem is not larger than that of $\rho$, meaning that a feasible solution of the new subproblems is also a feasible solution of $\rho$. Also, per definition, each $\rho_i$ has strictly fewer $\star$-neurons than $\rho$ as the state of $n$ is fixed.
We then update the activation states of $\rho_i$ (see below) and compute the upper (lower) bounds. Afterwards, each new subproblem is either terminated according to the rules described in section \ref{sec: methods} or added to $\varrho$, which completes the recursion.

Note that iterating through all polyhedra in a decomposition according to remark \ref{rem: methoids-pwl2} can be intuitively understood as setting the activation state of only that neuron (plus some others that automatically get set with it, depending on the PWL function). One hopes that the constriction enforced by setting one neuron also results in the fixation of others. One could also use a decomposition as in definition \ref{def: methods-pwl} to force this, but this would imply a drastic increase in the number of necessary polyhedra. The advantage would be that we would be certain that layer $\widetilde{l}$ is fixed in all newly created subproblems.

\paragraph{FFilter}
After the branching step, we run a feasibility filter that checks whether we have also implicitly set the state of other neurons. For this, we combine the LinProp and FFilter algorithms of \cite{bhowmick_lipbab_2021}. We multiply the first $\widetilde{l}-1$ layers, on which $f$ is linear into one linear function and then, starting with layer $\widetilde{l}$ check, whether the other layers contain $\star$-neurons or not. Also, $[\Lambda]_{\widetilde{l}}$ and $[\lambda]_{\widetilde{l}}$ are updated. If a layer does not contain $\star$-neurons, its linear state is multiplied with the propagated linear function and $\widetilde{l}$ is increased by one. We terminate this process if we encounter a layer with $\star$-neurons. The activation states of deeper layers are not updated.
\subsection{Conversion guarantee}\label{sec: conversion}

After the algorithm has been run - either to completion or until a fixed approximation quality has been reached - the global upper and lower bounds are returned. If the algorithm is run to completion, these two bounds match and are an exact solution to the maximization problem in equation \eqref{eq: methods-maxOverPoly}: 
\begin{lem}\label{lem: convGuarant}
	The subproblem with the greatest upper bound is an optimal solution of the maximization problem, i.e. its upper bound equals the true Lipschitz constant of $f$.
\end{lem}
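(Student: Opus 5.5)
The plan is to read the lemma as a statement about the state of the algorithm when it is run to completion. At that point the subproblem that attains the greatest upper bound must be one that was solved exactly. Its upper bound is therefore the matrix norm of a fixed linear piece, and that norm equals $L(f,\Omega)$.

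I would first establish three invariants that hold throughout the loop of Algorithm \ref{alg: simplifiedMain}.
\begin{enumerate}[(a)]
\item The regions $\Omega_\rho$ of the active subproblems, together with the regions of all subproblems fathomed so far, cover $\Omega$ up to a set of measure zero. This holds because in the branching step the polyhedra $\mathcal{Q}_i$ are obtained by intersecting $\Omega_\rho$ with a cover of the pre-activation space (remark \ref{rem: methoids-pwl2}). Only empty intersections are dropped, and these contain no points.
\item For every $\rho$ we have $L(f,\Omega_\rho)\leq \rho.L_{ub}$. This follows from lemma \ref{lem: bound-lipBabLem} together with the interval-matrix argument preceding it. In the exactly solvable case it is an equality, namely $\|J_{\widetilde f}\|$.
\item $\text{glb}\leq L(f,\Omega)$. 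Every value assigned to glb is either $\|T_i\|$ for some linear piece $\mathcal{P}_i$, or a Jacobian norm at a differentiable point. Both are bounded by $L(f,\Omega)$ by theorem \ref{thm: methods-maxOverPoly}.
\end{enumerate}
Together, (a) and (b) give $L(f,\Omega)\leq \max(\text{gub}, \text{upper bounds of fathomed subproblems})$. For subproblems fathomed by bounds, the upper bound is at most glb. For subproblems fathomed by optimality, the upper bound was absorbed into glb. Hence $L(f,\Omega)\leq\max(\text{gub},\text{glb})$.

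Next I would argue termination. Each branching step strictly reduces the number of $\star$-neurons, and there are finitely many neurons. The branching tree is therefore finite, so after finitely many iterations every subproblem is either exactly solvable or fathomed. Running to completion means $\theta=1$, or equivalently that the loop only stops once gub $\leq$ glb. Consider the subproblem $\rho^\ast$ attaining the greatest upper bound. If it were not exactly solvable it would still be branchable, and the loop would not have stopped. So it is exactly solvable, and $f$ is fixed linear on $\Omega_{\rho^\ast}$ with Jacobian $T$. Its upper bound $\|T\|$ coincides with the matrix norm of one of the pieces in definition \ref{def: methods-pwl}, since $\Omega_{\rho^\ast}$ has nonempty interior contained in some $\mathcal{P}_i$. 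By theorem \ref{thm: methods-maxOverPoly} we get $\|T\|\leq L(f,\Omega)$. Combining this with $L(f,\Omega)\leq\max(\text{gub},\text{glb})=\|T\|$ yields equality.

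The step I expect to be the main obstacle is justifying that $\Omega_{\rho^\ast}$, and in general each newly created $\mathcal{Q}_i$, has nonempty interior. Otherwise $\|J_{\widetilde f}\|$ might be the norm of a degenerate piece that exceeds every genuine $\|T_i\|$. To handle this I would first require the feasibility check in the branching step to test for full-dimensionality, for instance via a Chebyshev-center LP. Alternatively, I would note that lower-dimensional regions contain no differentiable points and can be discarded without affecting the covering invariant (a), which only holds up to measure zero anyway.
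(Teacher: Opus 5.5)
\paragraph{The step that fails.}
Your invariants (a)--(c) and the finiteness argument are the right ingredients, but the step that singles out $\rho^\ast$ fails for the algorithm as specified. The loop does not stop when the top subproblem becomes exactly solvable. It stops as soon as $\text{gub}\le\theta\cdot\text{glb}$, which can happen while branchable subproblems with bound $\le\text{glb}$ are still in $\varrho$. In fact, exactly solved subproblems never enter $\varrho$ at all: Branch first raises glb to their bound, and Algorithm~\ref{alg: main} inserts only subproblems with $L_{ub}>\text{glb}$.

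So ``otherwise it would still be branchable and the loop would not have stopped'' is a non sequitur. The equality $\max(\text{gub},\text{glb})=\|T\|$ is also unsupported, for instance when glb comes from the sampled initial bound.

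You do not need $\rho^\ast$. At termination $\text{gub}\le\text{glb}$, so (a)--(c) give $L(f,\Omega)\le\max(\text{gub},\text{glb})=\text{glb}\le L(f,\Omega)$, hence $\text{glb}=L(f,\Omega)$. Every leaf of the branching tree (active, fathomed by bounds, or fathomed by optimality) then has upper bound at most $L(f,\Omega)$. Some leaf attains it:
\begin{itemize}
\item the leaf that set glb, or
\item if glb came only from sampling, a leaf whose region meets the interior of a maximizing piece $\mathcal{P}_i$ in positive measure. Such a leaf exists by (a), and its bound is $\ge L(f,\Omega_\rho)\ge\|T_i\|$ by (b).
\end{itemize}
That is the precise content of the lemma. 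It has to be read over leaves, since interior nodes such as $\rho_I$ typically have bounds above $L(f,\Omega)$.

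\paragraph{Comparison with the paper.}
Beyond this repair, your route genuinely differs from the paper's. The paper gives no direct argument. It defers to standard branch-and-bound theory and checks only two hypotheses of the convergence theorem of Gendron and Crainic:
\begin{itemize}
\item a lower bound is produced only by a $\star$-free subproblem, where it is claimed to equal $L(f,\Omega_\rho)$ and hence to be feasible;
\item a subproblem counts as solved exactly when its two bounds coincide.
\end{itemize}
Your version is self-contained and makes explicit what that theorem presupposes: the covering invariant, the validity of the interval upper bounds, and the finiteness of the tree.

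This pays off in your final paragraph, which identifies a real hole in the paper's first hypothesis. That hypothesis needs $\Omega_\rho$ to have nonempty interior, while Branch only tests nonemptiness. Take $f(x)=\mathrm{ReLU}(x)-\mathrm{ReLU}(-x)=x$ and branch with closed half-spaces, as the representation $Cx\le c$ forces. This creates the nonempty region $\{0\}$ on which both neurons are active, with Jacobian norm $2>L(f,\mathbb{R})=1$, so glb would overshoot. One of your remedies is therefore required for the lemma to hold, not merely a technical nicety: either a full-dimensionality test, or discarding lower-dimensional regions, which leaves (a) intact since they are null sets.
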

This is a standard result from Branch-and-Bound theory \cite{lee_first_2004}. We also mention that our algorithm suffices the theorem of \cite{gendron_parallel_1994}:

\begin{proof}
	As we only implemented an elimination via upper and lower bounds, we only have to check the first two convergence assumptions of \cite{gendron_parallel_1994}.
	Firstly, our algorithm only returns a lower bound for a subproblem $\rho$, if $\rho$ contains no $\star$-neurons. In such a case, the lower bound equals the true Lipschitz constant of $f$ constrained to $\Omega_\rho$ and is therefore a feasible solution to the original problem. Secondly, a subproblem is solved if and only if its upper and lower bound have been computed and are equal. 
\end{proof}
We also mention that, if the initial input space is chosen as $\Omega = \mathbb{R}^{d_0}$, our method computes the true global Lipschitz constant.

\section{Application}
In this section, we apply the ExLipBaB algorithm to networks trained on various datasets and compare the obtained Lipschitz constants to other approximation methods.
However, as a test of basic correctness, we first applied our method to the four ReLU-activated networks for which exact Lipschitz constants were reported in \cite{bhowmick_lipbab_2021}. We are happy to report that our approach is able to replicate the Lipschitz constants computed by their algorithm to over 10 decimals for all tested norms ($||.||_1, ||.||_2, ||.||_{\infty}$). 

In the other listed results, we frequently compare our method to the LipSDP approach, which is a widely cited global approximation approach. For ReLU networks, we use the original implementation \cite{fazlyab_efficient_2019}, if not otherwise mentioned in the most accurate, ``network'' variant. For GroupSort networks with group size two, we use the recently proposed generalization of LipSDP \cite{pauli_novel_2024}, which is based on the more scalable ``neuron'' variant.
All computations were performed under Windows 11 on an AMD Ryzen 9 7900X3D Cpu with 32 GB of RAM. The code for all simulations can be found under \url{https://github.com/tsplittg/ExLipBaB_Code}.  
\begin{table}[t]
	\centering
	\begin{tabular}{lrrrr}
		\toprule
		& \multicolumn{2}{c}{ReLU, 0.11 [0.04]} &  \multicolumn{2}{c}{GroupSort, 0.11 [0.05]}\\
		\cmidrule{2-5}
		& $L(f)$[std] & time (s) & $L(f)$[std] & time (s)\\
		\midrule
		Layerwise &3.12 [0.50]&0.00& 1.62 [0.32] & 0.00 \\
		ExLipBaB &1.15 [0.20]& 0.03& 1.01 [0.24]& 0.03 \\
		LipSDP &1.62 [0.32]& 3.63 &1.30 [0.30]& 0.00 \\
		\bottomrule
	\end{tabular}
	\caption{Test performance and global Lipschitz constant of 2 networks on synthetic data from the absolute value function over 20 runs. Shown are test RMSE mean [standard deviation] of the networks as well as mean Lipschitz estimate [std] for several methods.}
	\label{tab: appl-absoluteVal}
\end{table}

\subsection{Absolute value function}

The absolute value function $|.|$ is of specific importance in the context of Lipschitz-constrained NNs. It has been shown that, despite being 1-Lipschitz, it cannot be learned by a ReLU-NN which is constricted to be Lipschitz continuous with a constant less than 2 by layerwise approximation \cite{huster_limitations_2019}. Other activations, such as GroupSort, do not suffer from this and other theoretical limitation(s) \cite{anil_sorting_2019}.
To put these theoretical results into perspective, we show in table \ref{tab: appl-absoluteVal} the global Lipschitz constants of two unconstrained networks that were trained on data from $|.|$ for 20 random initializations.\\
It can be seen that the ReLU networks, which had learned $|.|$ reasonably well, had a true Lipschitz constant very close to one for most random seeds. However, in the layerwise approximation, their Lipschitz constant was overestimated by almost 200\% on average. For GroupSort networks, the true constant was ``only'' overestimated by, on average, 60\%. It is therefore important to highlight that the aforementioned inapproximability results only speak to the inadequacy of layerwise-constrained ReLU-networks. Still, as restricting a NN's Lipschitz constant usually requires a computation of that constant at every iteration of the optimizer, the computationally efficient layerwise approximation will likely remain one of the most common ways of Lipschitz constant estimation during training. Therefore, the ReLU function remains non-ideal for Lipschitz-constrained NNs.

\subsection{Real data applications}
For realistic data applications, we trained generic, unconstrained NNs on the abalone \cite{warwick_nash_abalone_1994}, wine quality \cite{cortez_modeling_2009} and bike sharing \cite{fanaee-t_event_2014} datasets. We then compare the exactly computed Lipschitz constant with approximations from a layerwise approximation, an estimation by SymProp only and an estimation through LipDSP. For all of these comparisons, we use the $||.||_2$-norm.
To give a meaningful comparison, we also always compute the global Lipschitz constant on $\mathbb{R}^{d_0}$ if not otherwise specified. We discuss in section \ref{sec: appl-regionSize} the implications of different input region sizes for the ExLipBaB algorithm.

As can be seen in the listed results, the bounds approximated by the LipSDP algorithm were relatively tight relative to exact computation in many cases, especially when compared to the layerwise and SymProp approximations. In the relatively simple abalone/ReLU case (table \ref{tab: appl-abalone}), the approximation was almost exact. However, in the moderately more complex small wine/ReLU example (table \ref{tab: appl-wine_small}) and large wine/GroupSort (table \ref{tab: appl-wine_larg}), the approximation by LipSDP was improved by 10\% through exact computation and in the abalone/GroupSort example by almost 50\%.

It is also important to mention that the global computation of an exact Lipschitz constant through ExLipBaB did not converge in two cases (see tables \ref{tab: appl-wine_larg} and \ref{tab: appl-bike}) and the LipSDP algorithm did not (properly) converge in one case (table \ref{tab: appl-bike}). We discuss the impact of the size of the input region in section \ref{sec: appl-regionSize}.

Generally, one can unsurprisingly see that an increase in network size (table \ref{tab: appl-wine_small} vs. table \ref{tab: appl-wine_larg}) results in an increase in runtime. However, dataset complexity seems to be even more impactful (compare table \ref{tab: appl-abalone} vs. table \ref{tab: appl-wine_small}), likely due to a more segmented PWL representation.

\begin{table}[t]
	\centering
	\begin{tabular}{l|rr|rr}
		\toprule
		\multicolumn{5}{l}{Abalone, Shape $[9, 16, 16, 1]$ } \\
		\midrule
		\multicolumn{1}{c|}{\multirow{2}{*}{\LARGE$\mathbb{R}^{d_0}$}}& \multicolumn{2}{c|}{ReLU} & \multicolumn{2}{|c}{MaxMin}\\
		\cmidrule{2-5}
		& estimate & time (s) & estimate & time (s) \\
		\midrule
		Layerwise &14.27&0.000&10.41&0.000 \\
		LipSDP &11.22&30.72&8.827&0.016\\
		SymProp &15.22&0.017&8.918 & 0.015 \\
		ExLipBaB &11.21&20.83&4.487&357.8 \\
		\bottomrule
	\end{tabular}
	\caption{Lipschitz constant estimation of NN trained on the abalone data w.r.t. $||.||_2$}
	\label{tab: appl-abalone}
\end{table}

\begin{table}[t]
	\centering
	\begin{tabular}{l|rr|rr}
		\toprule
		\multicolumn{5}{l}{Wine, Network shape $[11, 12, 12, 1]$ } \\
		\midrule
		\multicolumn{1}{c|}{\multirow{2}{*}{\LARGE$\mathbb{R}^{d_0}$}}& \multicolumn{2}{c|}{ReLU} & \multicolumn{2}{|c}{MaxMin}\\
		\cmidrule{2-5}
		& estimate & time (s) & estimate & time (s) \\
		\midrule
		Layerwise &5.86&0.000&3.78&0.000 \\
		LipSDP (*) &1.84&26.65&2.87&0.014\\
		LipSDP (**) &1.30&4.257&-&-\\
		SymProp &4.78&0.015&11.09& 0.013\\
		ExLipBaB &1.74&36.65&2.79& 2.444\\
		\bottomrule
	\end{tabular}
	\caption{Lipschitz constant estimation of NN trained on the wine quality data w.r.t. $||.||_2$. LipSDP (network) (**) converged to an implausible value; (*) lists the neuron variant also for ReLU.}
	\label{tab: appl-wine_small}
\end{table}
\subsection{Lower bound effectiveness}\label{sec: appl-lowerBound}
\begin{figure}[t]
	\centering
	\includegraphics[width=\columnwidth]{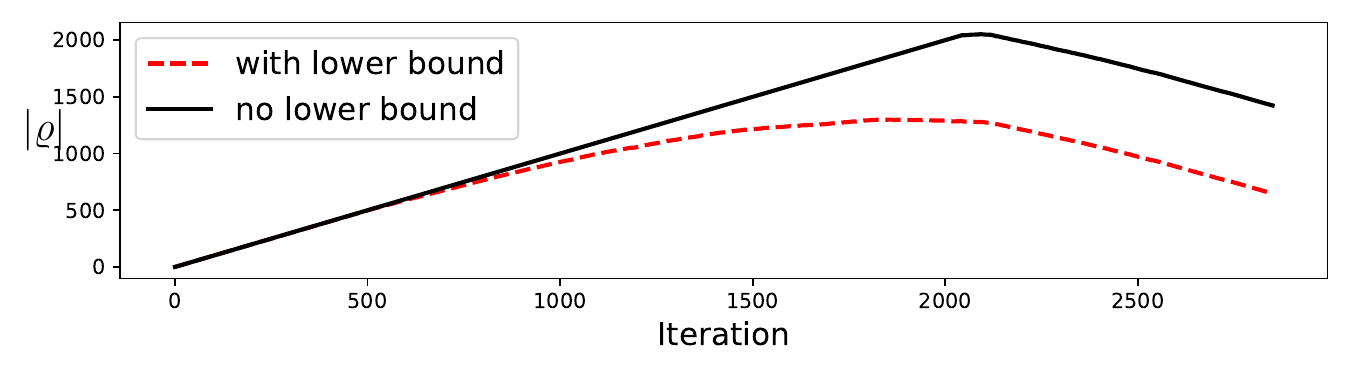}
	\caption{Number of subproblems for ExLipBaB on the NN trained on wine data with and without initial lower bound.}
	\label{fig: effectLowerBnd}
\end{figure}
\begin{table}[t]
	\centering
	\begin{tabular}{l|rr|rr}
		\toprule
		\multicolumn{5}{l}{Wine, Network shape $[11, 24, 24, 1]$  } \\
		\midrule
		\multicolumn{1}{c|}{\multirow{2}{*}{\LARGE$\mathbb{R}^{d_0}$}}& \multicolumn{2}{c|}{ReLU} & \multicolumn{2}{|c}{MaxMin}\\
		\cmidrule{2-5}
		& estimate & time (s) & estimate & time (s) \\
		\midrule
		Layerwise &5.86&0.000&6.10&0.002 \\
		LipSDP &1.19&4.630&4.70&0.012\\
		SymProp &4.78&0.016&25.3&0.040 \\
		ExLipBaB &2.62& (*)&4.23&377.3 \\
		\bottomrule
	\end{tabular}
	\caption{Lipschitz constant estimation of NN trained on Bike sharing data w.r.t. $||.||_2$. (*) The ExLipBaB algorithm did not converge after 12 hours.}
	\label{tab: appl-wine_larg}
\end{table}

\begin{table}[t]
	\centering
	\begin{tabular}{l|rr|rr}
		\toprule
		\multicolumn{5}{l}{Bike sharing, Network shape [13, 20, 16, 8, 3]} \\
		\midrule
		\multicolumn{1}{c|}{\multirow{2}{*}{\LARGE$\mathbb{R}^{d_0}$}}& \multicolumn{2}{c|}{ReLU} & \multicolumn{2}{|c}{MaxMin}\\
		\cmidrule{2-5}
		& estimate & time (s) & estimate & time (s) \\
		\midrule
		Layerwise &30848&0.000&13145&0.000 \\
		LipSDP &5895(*)&5.041&9946&0.018\\
		SymProp &20846&0.043&58484&0.038\\
		ExLipBaB &10802&39283& - & (**) \\
		\bottomrule
	\end{tabular}
	\caption{Lipschitz constant estimation of NN trained on Bike sharing data w.r.t. $||.||_2$. (* The LiSDP algorithm in the forms neuron and layer did not converge; the network variant gave a bound below the true Lipschitz constant computed by ExLipBaB. (**) The algorithm hadn't converged after 24 hours.)}
	\label{tab: appl-bike}
\end{table}

We also test the effect of a simple initial lower bound as proposed in section \ref{sec: symProp}. For this, we test the cardinality of $\varrho$ during a full run of ExLipBaB with and without an initial lower bound computed on $5\cdot 10^6$ grid points. On the wine-network of shape [11, 24, 24, 1] with the GroupSort activation, the lower bound computation took 5.66 seconds, resulting in a bound of 1.629, which resulted in one fewer subproblem being added to $\varrho$. On the network with shape [11, 12, 12, 1] with the ReLU activation, the computation took 2.65 seconds with a lower bound of 1.677. As can be seen in figure \ref{fig: effectLowerBnd}, this resulted in a drastic decrease of $|\varrho|$; however, the number of iterations needed for an exact ExLipBaB run did not change. The usefulness of a lower bound seems therefore highly reliant on its quality and its effect will most likely only lie in a reduction of required memory. Given the comparatively quick computation time of a lower bound, there is also not much downside to using it, especially for low-dimensional inputs, where a good coverage of the input space can be obtained fairly easily.

\subsection{Influence of input space size}\label{sec: appl-regionSize}
Unlike the layerwise and LipSDP estimators, our method is not just able to compute the global Lipschitz constant $L(f, \mathbb{R}^{d_0})$, but also local constants $L(f, \Omega)$ for $\Omega\subsetneq \mathbb{R}^{d_0}$. In addition to reducing the computational cost, due to a smaller number of intersecting linear regions of $f$ with $\Omega$, this can have practical advantages as local robustness guarantees can be far less conservative than global ones.\\
We illustrate this on the network from table \ref{tab: appl-wine_larg}, for which a global computation with the GroupSort activation took 6 minutes with a resulting constant of 4.23. If the input space is restricted to $\Omega = [-0.1, 0.1]^{d_0}$, the computation takes only 5.1 seconds with a constant of 3.32. For $\Omega = [-0.2, 0.2]^{d_0}$, it takes 40.4 seconds for a constant of 3.61 and for $\Omega = [-0.4, 0.4]^{d_0}$ it takes 136 seconds for a constant of 4.12. For $\Omega = [-1, 1]^{d_0}$, the computation is almost equivalent to a global one with a runtime of 365 seconds and a constant of 4.23.

If the input region is chosen small enough, even the Lipschitz constant of a relatively complex network could therefore be feasibly computed in reasonable time.

\section{Conclusion}
The ExLipBaB algorithm we discussed can compute the exact Lipschitz constant for general continuous piecewise linear neural networks. This generalization is of special importance in the field of Lipschitz-constrained NNs, as the ReLU function, which most existing methods are tailored towards, has been shown to have theoretical limitations in this field.
Our generalization also allows a relatively straightforward extension to convolutional- and residual NNs in future research.\\
The importance of Lipschitz constant estimation for convolutional neural networks specifically, even for small ones, has been a topic of study in recent years, often in the context of adversarial robustness.
In the context of residual NNs, Lipschitz constant estimation is often performed in the method of Residual Normalizing Flows. Such Flows need to keep a balance in their allowed Lipschitz constant to be both invertible and expressive. Both fields would therefore benefit from tighter Lipschitz bounds.
However, future research would likely also have to leverage the specific structure and sparsity of CNNs, as Branch-and-Bound methods would likely otherwise not scale to any but the smallest CNNs in practice.\\
Still, our algorithm lends itself naturally to parallelization. In combination with reduced RAM requirements through tighter lower bounds, this could greatly improve its applicability to more complex tabular networks and larger input regions. 

\section*{Acknowledgments}
This work has been funded by the Deutsche Forschungsgemeinschaft (DFG, German Research Foundation) - 459360854 as part of the Research Unit "Lifespan AI: From Longitudinal Data to Lifespan Inference in Health" (DFG FOR 5347), University of Bremen (\url{https://lifespanai.de/})

\clearpage

\bibliographystyle{apalike}
\bibliography{references}

\clearpage

\appendix

\setcounter{section}{0}
\setcounter{page}{1}
\setcounter{figure}{0}
\setcounter{equation}{0}
\setcounter{example}{0}
\setcounter{theorem}{0}
\setcounter{defi}{0}
\setcounter{lem}{0}
\setcounter{rem}{0}

\renewcommand{\thesection}{S\arabic{section}}
\renewcommand{\thepage}{S\arabic{page}}
\renewcommand{\thetable}{S\arabic{table}}
\renewcommand{\thefigure}{S\arabic{figure}}
\renewcommand{\theequation}{S\arabic{equation}}
\renewcommand{\theexample}{S.\arabic{example}}
\renewcommand{\thetheorem}{S.\arabic{theorem}}
\renewcommand{\thedefi}{S.\arabic{defi}}
\renewcommand{\thelem}{S.\arabic{lem}}
\renewcommand{\therem}{S.\arabic{rem}}

\begin{center}
	\LARGE \textbf{{Supplementary Material for \\ ExLipBaB: \\ Exact Lipschitz Constant Computation \\ for  Piecewise Linear Neural Networks}}
\end{center}

\section{Proof of theorem \ref{thm: methods-maxOverPoly}}
In this section, we give more details on the proof of theorem \ref{thm: methods-maxOverPoly}. First, we relate the Lipschitz constant of $f$ to its Jacobian. For differentiable functions, it is well-known that their Lipschitz constant is equal to the supremum over the norm of their Jacobian. As PWL functions generally are not everywhere differentiable, we make use of the generalized Jacobian \cite{jordan_exactly_2020}:
\begin{defi}\label{def: suppl-genJacob}
	The (Clarke) generalized Jacobian of a function $f$ at a point $x$ is defined as the convex hull \\ $\delta_f(x) := \textbf{co}\{\lim_{x_i\to x} J_f(x_i) : x_i \text{ is differentiable}\}$. 
\end{defi}
Informally, the generalized Jacobian can be seen as the convex hull of the Jacobians of points close to $x$ in which $f$ is differentiable.
In \cite{jordan_exactly_2020}, the following theorem is proven for arbitrary Lipschitz-continuous functions and vector norms, which relates the Lipschitz constant of $f$ to the supremum over the generalized Jacobian:
\begin{theorem}\label{thm: methods-jordansThm}
	Let $\alpha:\mathbb{R}^d\to\mathbb{R}^m$ be a (p-q)-Lipschitz continuous function on an open set $\Omega$. Let $\delta_\alpha(\Omega) := \bigcup_{x\in\Omega} \delta_\alpha(x)$, then:
	\begin{equation}\label{eq: methods-LipschJordan}
		L_{p\to q}(\alpha, \Omega) = \sup_{G\in\delta_\alpha(\Omega)} ||G||_{p\to q }.
	\end{equation}
\end{theorem}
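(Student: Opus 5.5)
We prove the two inequalities $\le$ and $\ge$ separately. Throughout, $\Omega$ is open and $\alpha$ is Lipschitz continuous on it.

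\emph{Lower bound.} Fix a direction $v$ and two points $x_1,x_2$ in the open set $\Omega$. Consider the segment $\gamma(s)=x_1+s(x_2-x_1)$, $s\in[0,1]$. Since $\alpha$ is Lipschitz, $s\mapsto\alpha(\gamma(s))$ is absolutely continuous. By Rademacher's theorem, $\alpha$ is differentiable almost everywhere in $\Omega$. A segment can lie entirely in a null set, so we perturb it: replace $x_1,x_2$ by $x_1+z,x_2+z$ for small $z$. By Fubini, for almost every $z$ the segment meets the non-differentiability set only in a one-dimensional null set. For such a segment,
\begin{equation*}
\alpha(x_2+z)-\alpha(x_1+z)=\int_0^1 J_\alpha(\gamma(s)+z)(x_2-x_1)\,ds,
\end{equation*}
and hence
\begin{equation*}
\|\alpha(x_2+z)-\alpha(x_1+z)\|_q\le \sup_{y}\|J_\alpha(y)\|_{p\to q}\,\|x_2-x_1\|_p .
\end{equation*}
Every Jacobian at a differentiable point lies in the generalized Jacobian at that point, so $\sup_y\|J_\alpha(y)\|\le\sup_{G\in\delta_\alpha(\Omega)}\|G\|$. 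Letting $z\to0$ and using continuity of $\alpha$ gives $L(\alpha,\Omega)\le\sup_{G\in\delta_\alpha(\Omega)}\|G\|$.

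\emph{Upper bound.} Take $G\in\delta_\alpha(x)$ and first assume $G=\lim J_\alpha(x_i)$ with $\alpha$ differentiable at each $x_i\in\Omega$ (eventually in $\Omega$, since $\Omega$ is open). At a differentiable point $y$, for every $v$,
\begin{equation*}
\|J_\alpha(y)v\|_q=\lim_{t\to0}\frac{\|\alpha(y+tv)-\alpha(y)\|_q}{t}\le L\,\|v\|_p .
\end{equation*}
So $\|J_\alpha(x_i)\|_{p\to q}\le L$. Since the operator norm is continuous, $\|G\|\le L$. A general element of $\delta_\alpha(x)$ is a convex combination of such limits, and the norm is convex, so the bound $\|G\|\le L$ persists. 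Combining the two directions gives the equality \eqref{eq: methods-LipschJordan}.

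The main technical obstacle is the measure-theoretic step in the lower bound: making sure the segment avoids the non-differentiability set almost everywhere. The perturbation/Fubini argument above is how I would handle it. Alternatively, one can invoke Lebourg's mean value theorem for Clarke generalized Jacobians applied to $\langle w,\alpha\rangle$ with $w$ a dual-norm maximizer. That route gives $\langle w,\alpha(x_2)-\alpha(x_1)\rangle=\langle w,G(x_2-x_1)\rangle$ for some $G$ in the generalized Jacobian at a point of the segment, and avoids Fubini entirely. I would use this as a backup if the perturbation argument becomes messy near the boundary of $\Omega$.
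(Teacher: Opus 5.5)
\paragraph{Verdict.} Your proof is correct once $\Omega$ is convex. It takes a different, self-contained route from the paper, which gives no argument of its own. But your first half relies on a hypothesis the statement omits, and without it the statement is false.

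\paragraph{The missing hypothesis.} Your first half silently uses more than openness of $\Omega$. To integrate $J_\alpha$ along $[x_1+z,x_2+z]$ and bound the integrand by $\sup_{G\in\delta_\alpha(\Omega)}\|G\|_{p\to q}$, the whole segment must lie in $\Omega$. Outside $\Omega$, $\alpha$ need not be Lipschitz, and its Jacobians there are not controlled by $\delta_\alpha(\Omega)$. No cleverer perturbation can fix this, because for non-convex open $\Omega$ the statement is false. Take $d=m=1$, $\Omega=(0,1)\cup(2,3)$ and $\alpha(x)=\min(\max(x-1,0),1)$. Then $\alpha$ is locally constant on $\Omega$, so $\delta_\alpha(\Omega)=\{0\}$, whereas $L(\alpha,\Omega)=1$ (let $x_1\uparrow 1$, $x_2\downarrow 2$).

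Your Lebourg backup has the same requirement, since that mean value theorem needs $\alpha$ Lipschitz on an open set containing $[x_1,x_2]$. So you should state convexity of $\Omega$ as a hypothesis; it holds in the paper's uses, where $\Omega$ is a convex polyhedron or $\mathbb{R}^{d_0}$. Once you add it, your worry about the boundary disappears. The segment $[x_1,x_2]$ is a compact subset of the open set $\Omega$, so it has positive distance $r$ from $\mathbb{R}^d\setminus\Omega$, and every shift with $\|z\|<r$ keeps it inside $\Omega$.

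\paragraph{Comparison with the paper.} The paper proves nothing here and simply defers to Jordan and Dimakis. Your Rademacher--Fubini argument is elementary and self-contained. The only property of $\delta_\alpha$ it uses is $J_\alpha(y)\in\delta_\alpha(y)$ at differentiable $y$, which follows from the paper's definition by taking the constant sequence. The Lebourg route trades the measure theory for Clarke's calculus: the mean value theorem plus the chain-rule inclusion $\partial\langle w,\alpha\rangle(u)\subseteq w^\top\delta_\alpha(u)$.

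Your main route also gives a sharper conclusion for free. Writing $D_\alpha$ for the set of differentiability points, your first half bounds $L(\alpha,\Omega)$ by the supremum of $\|J_\alpha(y)\|_{p\to q}$ over $y\in\Omega\cap D_\alpha$ only. Combined with your second half, this yields
\[
L_{p\to q}(\alpha,\Omega)=\sup_{y\in\Omega\cap D_\alpha}\|J_\alpha(y)\|_{p\to q}=\sup_{G\in\delta_\alpha(\Omega)}\|G\|_{p\to q}.
\]
This is exactly the identity the paper wants in its proof of Lemma~\ref{lem: bound-lipBabLem}, where it says it could not locate such a result for general $f$. It would also let Theorem~\ref{thm: methods-maxOverPoly} be proved without the convex-hull/Bauer step.

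\paragraph{Minor point.} Your headings are swapped: the part labelled ``lower bound'' proves an upper bound on $L$.
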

Using this theorem, the proof of theorem \ref{thm: methods-maxOverPoly} can be completed:
\begin{proof}
	We use the notation of theorem \ref{thm: methods-maxOverPoly} and definition \ref{def: methods-pwl}. First we argue that $\alpha$ is Lipschitz continuous and that theorem \ref{thm: methods-jordansThm} can be applied. We then only need to prove that $\sup_{G\in\delta_\alpha(\Omega)} ||G||_{p\to q } = \max_{i = 1,\dots, k} ||T_i||_{p\to q}$. For this, let $x\in\Omega$ and $\widetilde{G} \in\delta_\alpha(x)$ be arbitrary. Since $\delta_\alpha(x)$ is a convex set per definition and since all norms are both continuous and convex, Bauer's maximum principle implies that $\max_{G\in\delta_\alpha(x)}$ occurs on some extreme point of $\delta_\alpha(x)$. Since $\delta_\alpha(x)$ is defined as a convex hull (see def. \ref{def: suppl-genJacob}), there must exist a sequence $(x_i)_{i\in\mathbb{N}}$ such that $||\lim_{x_i\to x} J_\alpha(x_i)|| \geq ||\widetilde{G}||$. Since, for an arbitrary index $i$, the vector $x_i$ is in the interior of exactly one polyhedron $\mathcal{P}_{j_i}$ from the PWL decomposition of $\alpha$, it holds that $J_\alpha(x_i) = T_{j_i}$. Therefore, it holds that $||J_\alpha(x_i)|| \leq  \max_{i = 1,\dots, k} ||T_i||$. Since norms are continuous functions, it therefore follows that 
	\begin{equation*}
		||\widetilde{G}|| \leq ||\lim_{x_i\to x} J_\alpha(x_i)|| \leq \max_{i = 1,\dots, k} ||W_i||_{p\to q} \quad .
	\end{equation*}
	Since $x$ and $\widetilde{G}$ were chosen arbitrarily, this proves that $\sup_{G\in\delta_\alpha(\Omega)} ||G||_{p\to q } = \max_{i = 1,\dots, k} ||T_i||_{p\to q}$.
\end{proof}

\section{Proof of equation \eqref{eq: symprop-intersectEquiv}}
In this section, we prove equation \ref{eq: symprop-intersectEquiv} using the notation introduced there.
\begin{proof} 
	$ $ \newline
	\textbf{``$\Rightarrow$'':}
	Let $y\in (W_1 \cdot \Omega_\rho+ w_1)\cap \mathcal{P}_i$. Then, because $y\in (W_1 \cdot \Omega_\rho+ w_1)$, there must exist an $x\in\Omega_\rho$ such that $W_1 \cdot x+w_1 = y$. Also, because $y\in\mathcal{P}_i$, it must fulfill $A_i \cdot y < a_i$. Inserting the first equation into the second, we see that $x$ fulfills $A_i\cdot W_1 \cdot x < a_i - A_i \cdot w_1$. As $x\in\Omega_\rho$, it also holds that $Cx \leq c$, which provers the rightwards direction.\\
	
	\textbf{``$\Leftarrow$'':}
	Let $x \in \{x\in\mathbb{R}^{d_0}: \widetilde{A}_i\cdot x \leq \widetilde{a}_i\}$. Then, per definition of $\widetilde{A}_i$ and $\widetilde{a}_i$, it holds that $C \cdot x \leq c$, meaning $c\in\Omega_\rho$. Also, let $y := \widetilde{B} \cdot x+\widetilde{b}$. Per definition of $x$, it holds that $A_i \cdot \widetilde{B} \cdot x \leq a_i - A_i\cdot \widetilde{b}$, meaning $A_i \cdot y \leq  a_i$, proving the leftwards direction.
\end{proof}

\section{Proof of lemma \ref{lem: bound-lipBabLem}}
As was mentioned in section \ref{sec: lipBounds}, the proof for lemma \ref{lem: bound-lipBabLem} that was given in \cite{bhowmick_lipbab_2021} easily transfers to the general case of $||.||_{p\to q }$-norms, which we show in this section. We therefore explicitly mention that the proofs in this section are only slight variations of the proofs in \cite{bhowmick_lipbab_2021}.
First, we formulate a helper lemma:
\begin{lem}\label{lem: matrixDominanceNorm}
	Let $A$ and $B$ be two $(m\times n)$-dimensional matrices such that componentwise:
	\begin{equation*}
		|A_{ij}|\leq B_{ij} \qquad \forall i, \quad .
	\end{equation*}
	It then holds that $||A||_{p\to q} \leq ||B||_{p\to q}$.
\end{lem}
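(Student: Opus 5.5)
We argue directly from the definition of the induced norm, $||A||_{p\to q} = \max_{x\neq 0} ||Ax||_q/||x||_p$, and use two elementary monotonicity properties of $p$-norms. First, every $p$-norm (and the $q$-norm on the output side) is \emph{absolute}: $||v||_q = ||\,|v|\,||_q$, where $|v|$ denotes the componentwise absolute value. Second, every $p$-norm is \emph{monotone} on the nonnegative orthant: if $0\leq u_i\leq v_i$ for all $i$, then $||u||_q\leq ||v||_q$. For $1\leq q<\infty$ this follows because $t\mapsto t^q$ is increasing on $[0,\infty)$. For $q=\infty$ it follows directly from the definition of the maximum.

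Fix an arbitrary $x\neq 0$ and set $|x|:=(|x_1|,\dots,|x_n|)^\top$. The key componentwise estimate comes from the triangle inequality and the hypothesis $|A_{ij}|\leq B_{ij}$:
\begin{equation*}
	|(Ax)_i| = \Big|\sum_j A_{ij}x_j\Big| \leq \sum_j |A_{ij}|\,|x_j| \leq \sum_j B_{ij}\,|x_j| = (B|x|)_i .
\end{equation*}
Note that $B_{ij}\geq |A_{ij}|\geq 0$, so $B|x|$ has nonnegative entries. Therefore $0\leq |Ax|\leq B|x|$ componentwise. Absoluteness and monotonicity of $||.||_q$ then give $||Ax||_q \leq ||B|x|||_q$.

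Since $||\,|x|\,||_p=||x||_p$, we obtain
\begin{equation*}
	\frac{||Ax||_q}{||x||_p}\leq \frac{||B|x|||_q}{||\,|x|\,||_p}\leq ||B||_{p\to q}.
\end{equation*}
The second inequality holds because $|x|\neq 0$ is an admissible vector in the definition of the induced norm of $B$. Taking the maximum over $x\neq 0$ yields $||A||_{p\to q}\leq ||B||_{p\to q}$.

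The only step needing care is the monotonicity/absoluteness property of the norms. It holds for all $p$-norms including $p=\infty$, but it would fail for general (non-absolute) norms. So the statement should be read for $p,q\in[1,\infty]$, which is the setting of the paper. Everything else is a routine triangle-inequality estimate.

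Lemma \ref{lem: bound-lipBabLem} then follows by combining this result with Theorem \ref{thm: methods-maxOverPoly}. Apply the lemma with $A=T_i=J_f(x)$ for $x$ in the interior of $\mathcal{P}_i$, which gives $||T_i||_{p\to q}\leq ||U||_{p\to q}$ for every $i$. Taking the maximum over $i$ and invoking Theorem \ref{thm: methods-maxOverPoly} yields $L_{p\to q}(f,\Omega)\leq ||U||_{p\to q}$.
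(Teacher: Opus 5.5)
Your proof is correct and follows the paper's argument: the same componentwise estimate $|(Ax)_i|\leq (B|x|)_i$, then monotonicity of the $q$-norm on nonnegative vectors together with $||\,|x|\,||_p=||x||_p$. The only difference is cosmetic. The paper first fixes a maximizer $x^*$ on the unit sphere via compactness, whereas you bound the quotient for an arbitrary $x\neq 0$ and then take the supremum, so you do not need compactness.
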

\begin{proof}
	It is a wel-known result from functional analysis that, because $\{x\in\mathbb{R}^n: ||x||_p=1\}$ is a compact set, there exists an $x^*\in\mathbb{R}^n$ such that $||x||_p=1$ and $||Ax^*||_q = ||A||_{p\to q}$. Define now a vector $x'$ such that $x'_i = |x^*_i|$ for all $i$. From the definition of a $p$ vector norm, one can easily see that $||x'||_p = ||x^*||_p = 1$. Now, the chain of inequalities holds for $1\leq i \leq m$:
	\begin{align*}
		|(Ax^*)_i| &= \big|\sum_{j=1}^n A_{ij} x_j^*\big| \leq \sum_{j=1}^n |A_{ij}| \cdot |x_j^*| \\
		\overset{|A_{ij}|\leq B_{ij}}&{\leq} \sum_{j=1}^n B_{ij} \cdot |x_j^*| = (Bx')_i \quad.
	\end{align*}
	From the definition of the $p$ vector norms, it is now easy to see that the above inequalities imply $||Ax^*||_q\leq ||Bx'||_q$. We can therefore, due to the definition of $x^*$, form the following chain of equalities:
	\begin{equation*}
		||A||_{p\to q} = ||Ax^*||_q \leq ||Bx'||_q \leq ||B||_{p\to q} \cdot ||x'||_p = ||B||_{p\to q}
	\end{equation*}
\end{proof}
Having proven lemma \ref{lem: matrixDominanceNorm}, the proof of lemma \ref{lem: bound-lipBabLem} is almost finished.
\begin{proof}[Proof of lemma \ref{lem: bound-lipBabLem}]
	For differentiable functions, it is well known that $L_{p\to q}(f,\Omega) = \sup_{x\in\Omega} ||J_f(x)||_{p\to q}$, which together with lemma \ref{lem: matrixDominanceNorm} would prove the result. However, we assume a PWL (and not necessarily differentiable) $f$. In \cite{bhowmick_lipbab_2021}, the authors cite \cite{jordan_exactly_2020}, to get a similar result for not everywhere differentiable functions. To our knowledge, \cite{jordan_exactly_2020} however only prove such a result in the case of general position ReLU networks. As we only require a proof for PWL functions $f$, we refer therefore to theorem \ref{thm: methods-maxOverPoly}, which together with lemma \ref{lem: matrixDominanceNorm} also proves lemma \ref{lem: bound-lipBabLem}.
\end{proof}

\section{Details on practical applications}
\subsection{Bike sharing}
For the bike sharing dataset, we used a $0.64-0.16-0.2$ train-validation-test split of the data. The ``date'' feature was converted into a ``YYYYMMDD`''-coded integer. The data was also normalized. We trained ReLU and GroupSort networks of shape $(13, 20, 16, 8, 3)$ with the Adam optimizer with a learning rate of 0.001 and weights loss of $10^{-3}$. We trained for a maximum of 2000 epochs with a batch size of 256 and performed early stopping with a patience of 5 and a minimum improvement of $10^{6}$.\\
The ReLU network achieved a test RMSE of 40.33. The GroupSort network with group size 2 achieved a test RMSE of 40.38 and that with group size 4 one of 37.58.

Net GroupSort with shape 13x18x18x18x3, groupsize 2 had RMSE 37.94

Network with width 24 had 35.97

\subsection{Wine quality dataset}
For the wine quality dataset, we again used a $0.64-0.16-0.2$ train-validation-test split of the data. We used 11 features, deleting the ``color'' feature, using the ``quality'' as target. We trained two networks, both used a learning rate of 0.001, a batch size of 256 and 1000 epochs with the Adam optimizer with a patience of 5 and a minimum improvement of $10^{6}$.\\
The network of shape [11, 12, 12, 1] achieved a test RMSE of 0.703 with the ReLU activation and 0.712 with the GroupSort activation with group size 2. The network of size $[11, 24, 24, 1]$ achieved a test RMSE of 0.712 for the ReLU- and 0.692 for the GroupSort activation.

\subsection{Abalone}
For the abalone dataset, we again used a $0.64-0.16-0.2$ train-validation-test split of the data. The categorical ``Sex'' feature was one-hot encoded and the reference dropped. The data was also normalized.\\
We then trained ReLU, LeakyReLU and GroupSort networks of shape $(9, 16, 16, 1)$ with the Adam optimizer with a learning rate of 0.001. We trained for a maximum of 1000 epochs with a batch size of 64 and performed early stopping with a patience of 5 and a minimum improvement of $10^{6}$.\\
The ReLU network achieved a test RMSE of 2.14, the GroupSort network one of 2.14 with a group size of two (MaxMin activation) and one of 2.17 for group size four.

%\subsection{Iris}
%For the iris data, we again used a $0.64-0.16-0.2$ train-validation-test split. The target variable was one-hot encoded with three levels. The independent features were normalized.\\
%We then trained ReLU, LeakyReLU and GroupSort networks of shape $(4, 6, 6, 3)$ with the Adam optimizer with a learning rate of 0.001. We trained for a maximum of 1000 epochs with a batch size of 64 and performed early stopping with a patience of 5 and a minimum improvement of $10^{6}$.As loss function, we use the combined softmax and BCE loss, which, in pytorch is called BCEWithLogitsLoss.\\
%The ReLU network achieved a test accuracy of 0.97 (BCE 0.072), the GroupSort network an accuracy of 0.97 (BCE 0.074) with a group size of two (MaxMin activation) and 0.97 (BCE 0.068) for group size three.

\section{Common activation Functions}\label{sec: suppl-actfctns}
In this section, we discuss results pertaining to commonly used activation functions and also describe the polyhedron decomposition we use in our implementation of these functions.

\subsection{Componentwise PWL functions}\label{sec: suppl-actfctns-compWise}
In this subsection, we show that an activation function that applies a componentwise one-dimensional linear spline, such as ReLU, to the pre-activation is a PWL function in the sense of remark \ref{rem: methoids-pwl2}. To keep notation simple, we will only discuss settings in which the spline is the same on all dimensions, but our treatment can be easily transferred to the general case.\\
Let $\alpha:\mathbb{R}\to\mathbb{R}, x\mapsto \sum_{j=1}^k (a_j x + b_j) \mathds{1}_{I_j}(x)$, where $I_j$ are disjunct intervals whose union is $\mathbb{R}$ and where $a_j$ and $b_j$ are appropriate coefficients. It can be easily seen, that this structure includes ReLU for $a_1 = 0, b_1 = 0, I_1 =(-\infty, 0] $ and $a_1 = 1, b_1 = 0, I_1 = (0, \infty) $. It also includes LeakyReLU, simple learned activations like ParametricReLU and complex ones like learned spline activations.\\
We now assume that $\widetilde{\alpha}:\mathbb{R}^d\to\mathbb{R}^d$ is a componentwise activation function that applies $\alpha$ to each component of the pre-activation. Let then for $i=1,\dots, d$ and $j=1,\dots, k$ be $\mathcal{P}_{ij}$ be the polyhedron defined by:
\begin{equation*}
	\mathcal{P}_{ij} = \mathbb{R}^{i-1} \times I_j \times \mathbb{R}^{d-i} \quad ,
\end{equation*}

where we omit $\mathbb{R}^0$ when forming the product. We now claim that the seet $\{\mathcal{P}_{ij}\}_{\substack{i=1,\dots, d\\ j=1,\dots, k}}$ is a decomposition as in remark \ref{rem: methoids-pwl2}. This is easy to see, as, for an arbitrary neuron $n$, the union $\bigcup_{j=1, \dots, k} \mathcal{P}_{nj}$ is equal to $\mathbb{R}^d$ and on each $\mathcal{P}_{nj}$, the state of neuron $n$ is fixed, where $T^{nj}$ is a zero-vector except for the entry $a_j$ at its $n$-th position and $t^{nj}=b_j$.\\
This is the generalization of the hyper-rectangle representation that \cite{bhowmick_lipbab_2021} used.

\subsection{GroupSort}
In this subsection, we construct the polyhedron decomposition of the GroupSort activation function that was introduced in \cite{anil_sorting_2019} and further studied e.g. in \cite{tanielian_approximating_2021-1}. A GroupSort layer with group size $\gamma$ will split the pre-activation into groups of size $\gamma$, sort the entries of each group in ascending order and return the results.\\

We also prove mathematically that this is a PWL function and construct an appropriate decomposition of our method. A study of the relation between GroupSort and other PWL functions can also be found in \cite{tanielian_approximating_2021-1}.\\
Our treatment naturally caries over to MaxMin and Fullsort, which are special cases of GroupSort with either one group only or $g=2$ as \cite{anil_sorting_2019} mentions.\\
As a starting point, we discuss the FullSort activation, as the GroupSort activation is essentially just composed of multiple FullSort functions of lower dimensionality.
Let therefore $\alpha:\mathbb{R}^\gamma \to\mathbb{R}^\gamma$ be the FullSort activation, i.e. the function that transforms any vector $(x_1,\dots, x_\gamma)$ into a vector with all entries sorted in ascending order:
\begin{equation}
	\alpha((x_1,\dots, x_\gamma)) = (x_{i_1}, \dots, x_{i_\gamma}) ,  
\end{equation}
with $x_{i_1}\leq x_{i_2} \leq \dots \leq x_{i_\gamma}$.

\begin{lem}\label{lem: suppl-actfctns-fullsortPWL}
	FullSort is a PWL function.
\end{lem}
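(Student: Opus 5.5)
We show that FullSort meets Definition~\ref{def: methods-pwl} by decomposing $\mathbb{R}^\gamma$ according to permutations. For every permutation $\sigma$ of $\{1,\dots,\gamma\}$ let
\begin{equation*}
	\mathcal{P}_\sigma := \{x\in\mathbb{R}^\gamma : x_{\sigma(1)}\leq x_{\sigma(2)}\leq\dots\leq x_{\sigma(\gamma)}\}.
\end{equation*}
Each $\mathcal{P}_\sigma$ is the intersection of the $\gamma-1$ closed half spaces $x_{\sigma(r)}-x_{\sigma(r+1)}\leq 0$. It is therefore a convex polyhedron, and in the form $Ax\leq a$ we have $a=0$. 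The proof then checks the three conditions of Definition~\ref{def: methods-pwl} in order.

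\textbf{Nonempty interiors and covering.} The interior of $\mathcal{P}_\sigma$ contains the point with $x_{\sigma(r)}=r$, since all defining inequalities are strict there. More generally, $\mathcal{P}_\sigma^\circ$ is the set where all these inequalities are strict: the set of strict inequalities is open and its closure is $\mathcal{P}_\sigma$. For the covering condition, take any $x\in\mathbb{R}^\gamma$ and sort its entries, breaking ties arbitrarily (say by index). This gives a permutation $\sigma$ with $x\in\mathcal{P}_\sigma$, so $\bigcup_\sigma\mathcal{P}_\sigma=\mathbb{R}^\gamma$.

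\textbf{Disjoint interiors.} Take $\sigma\neq\tau$. Then some pair of indices $a,b$ is ordered differently: $\sigma$ places $a$ before $b$ and $\tau$ places $b$ before $a$. A point in both interiors would need $x_a<x_b$ and $x_b<x_a$ simultaneously, which is impossible. This is the step that needs the most care. It rests on the characterization of the interior as the strict-inequality set, and on the elementary fact that two distinct total orders disagree on some pair. I would state both explicitly.

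\textbf{Affinity on each piece.} On $\mathcal{P}_\sigma$ we have $\alpha(x)=(x_{\sigma(1)},\dots,x_{\sigma(\gamma)})=T^\sigma x$ with $t^\sigma=0$. Here $T^\sigma$ is the permutation matrix whose $r$-th row is $e_{\sigma(r)}^\top$. This formula holds even on the boundary, where ties occur, because there the tied entries are equal and any consistent sorting yields the same vector. Hence $\alpha$ is well defined and coincides with $T^\sigma x$ on all of $\mathcal{P}_\sigma$, and it is continuous.

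Finally, I would note two consequences. GroupSort applies FullSort blockwise, so it is PWL with pieces given by products of the blockwise $\mathcal{P}_\sigma$. For use in Remark~\ref{rem: methoids-pwl2}, one may take all $\gamma!$ polyhedra for every neuron.
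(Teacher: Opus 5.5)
Your proposal is correct and follows the paper's proof. It uses the same decomposition of $\mathbb{R}^\gamma$ into the $\gamma!$ cones $\{x_{\sigma(1)}\le\dots\le x_{\sigma(\gamma)}\}$, cut out by the $\gamma-1$ difference inequalities, with the permutation matrix as the linear piece, and additionally checks the covering, nonempty and disjoint interiors, and well-definedness at ties (conditions of Definition~\ref{def: methods-pwl}), which the paper leaves as easy to see. One small fix: openness of the strict-inequality set only shows it lies inside $\mathcal{P}_\sigma^\circ$, and its closure being $\mathcal{P}_\sigma$ gives the reverse inclusion only through an unstated convexity fact; since your disjointness step needs exactly that reverse inclusion, argue directly that a point of $\mathcal{P}_\sigma$ with $x_{\sigma(r)}=x_{\sigma(r+1)}$ is not interior, because increasing $x_{\sigma(r)}$ by any $\epsilon>0$ leaves $\mathcal{P}_\sigma$.
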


\begin{proof}
	Let $\pi = (j_1, \dots, j_d)$ be an arbitrary ordering of ${1, \dots, \gamma}$ (i.e. a \emph{permutation}) and let $A$ be the $(\gamma-1)\times \gamma$-matrix defined by:
	\begin{equation}
		A_{kl} = \begin{cases}
			1 &, \text{if } l=j_k \\ 
			-1 &, \text{if } l=j_{k+1} \\
			0 &, \text{else}
		\end{cases}
	\end{equation}
	and let $b$ be a zero-vector of length $\gamma-1$. Then the following holds:
	\begin{equation*}
		\{x\in\mathbb{R}^\gamma: x_{j_1}\leq \dots\leq x_{j_\gamma}\} = \{x\in\mathbb{R}^\gamma: Ax\leq b\} =:\mathcal{P}_\pi
	\end{equation*}
	and it is easy to see that we can decompose $\mathbb{R}^\gamma$ into such polyhedra.
	For any $x\in \mathcal{P}_\pi$, we can now compute $\alpha$:
	\begin{equation}\label{eq: suppl- actfctns-groupViaFull}
		\alpha(x)= (x_{j_1}, \dots, x_{j_\gamma}),
	\end{equation}
	
	which can be equivalently described as a transformation with a $(\gamma\times\gamma)$-permutation matrix $P$ given by:
	\begin{equation}
		P_{kl} = \begin{cases}
			1 &, \text{if } l = j_k \\
			0 &, \text{else}
		\end{cases}.
	\end{equation}
	We have therefore constructed a decomposition of $\mathbb{R}^\gamma$ into $(\gamma!)$-many polyhedra and, constrained to each polyhedron, $\alpha$ is a linear function.
\end{proof}

Let now $\tilde{\alpha}: \mathbb{R}^d\to \mathbb{R}^d$ be the GroupSort activation function with group size $\gamma$, which splits the pre-activation into $k$ different groups of size $\gamma$, sorts the elements of each group in ascending order and combines the result as its output. We assume that $\gamma$ is a divider of $d$, i.e. that $k\cdot \gamma = d$, but our discussion in this chapter as well as our implementation would easily translate to groups of inhomogeneous size.\\
Using earlier notation, let $\alpha$ again be the FullSort activation operating on $\gamma$-dimensional space. For any neuron with index $i\leq d$, we can decompose the index as $i = l_1\cdot\gamma+l_2$ and write:
\begin{equation}
	\tilde{\alpha}_i = \alpha((x_{l_1\gamma+1},\dots, x_{(l_1+1)\gamma}))_{l_2}, 
\end{equation}
which formalizes the intuition that GroupSort is essentially just a combination of multiple lower-dimensional FullSort functions.\\
It is therefore not difficult to see that one can construct a decomposition of a form as in definition \ref{def: methods-pwl} of $\mathbb{R}^d$ into polyhedra by forming all possible combinations of $k$ polyhedra from decompositions of $\mathbb{R}^\gamma$. Similarly, as can be seen from equation \eqref{eq: suppl- actfctns-groupViaFull}, in any such polyhedron, the state of all $d$ neurons would be fixed linear.\\
Similar to section \ref{sec: suppl-actfctns-compWise}, we however instead construct a decomposition following remark \ref{rem: methoids-pwl2} in which only the activation state of a subset of neurons is fixed in each polyhedron. For this, let $\mathcal{P}_1, \dots, \mathcal{P}_N$ be a decomposition of $\mathbb{R}^\gamma$ into polyhedra, such as the one constructed in the proof of lemma \ref{lem: suppl-actfctns-fullsortPWL}. We then define the following set of polyhedra:
\begin{equation*}
	\mathcal{Q}_{ij} := \mathbb{R}^{(i-1)\gamma} \times \mathcal{P}_j \times \mathbb{R}^{(k-i)\gamma} 
\end{equation*}
for $i=1,\dots, k$ and $j= 1,\dots,N$, where $\mathbb{R}^0$ is ignored when forming the cartesian product.\\
Then each $\mathcal{Q}_{ij}$ is a polyhedron of dimensionality $\gamma \cdot k =d$ and $\bigcup_{\substack{i\leq k\\j\leq N}} \mathcal{Q}_{ij} = \mathbb{R}^d$. \\
Also, on each $\mathcal{Q}_{ij}$, the state of the neurons with indices $(i-1)\gamma+1,\dots, i\gamma$ are fixed with their linear activation states given by the one of FullSort on $\mathcal{P}_j$.

\section{Code Listings}
In this section, we show pseudocode for the branch function as well as a more detailed pseudocode of the main algorithm than the on eprovideed in listing \ref{alg: simplifiedMain}.
\begin{algorithm}
	\caption{Branch}
	\hspace*{\algorithmicindent} \textbf{Input:}
	\begin{itemize}
		\item sub-problem $\rho$
		\item old glb
		\item list of weights and biases of Neural net
		\item $\alpha$ list of instances of PWL
		\item pnorm
	\end{itemize}
	\hspace*{\algorithmicindent} \textbf{Output: }
	\begin{itemize}
		\item new\_subproblems: list of instances of SubProblem class
		\item glb: float, new lower bound on Lipschitz constant over all sub-problems
	\end{itemize}	
	\begin{algorithmic}[1]
		\STATE initialize: new\_subproblems $\leftarrow$ empty list
		\STATE $\star$-neuron\_indices $\leftarrow$ $\rho.\star$-neurons$[\rho.\tilde{l}]$
		\STATE neuron\_to\_branch $\leftarrow$ $\star$-neuron\_indices$[0]$ //branch at first $\star$-neuron 
		\STATE possible\_polyhedra $\leftarrow$ $\alpha[\rho_{\tilde{l}}].$get\_polyhedron\_ decompostition(neuron\_to\_branch)
		
		\FOR{polyhedron in possible\_polyhedra}
		\STATE intersection $\leftarrow $ join constraints of $\Omega_\rho$ and polyhedron backwards propagated with $\rho$.propagated\_weights
		\IF{intersection is empty}
		\STATE go to next polyhedron
		\ENDIF
		\STATE act\_state\_$[\Lambda]$ , act\_state\_$[\lambda]$ $\leftarrow$ $\rho.[\Lambda], \rho.[\lambda]$
		\FOR{known decided\_neuron \textbf{in} polyhedron}
		\STATE update layer $\tilde{l}$ of act\_state\_$[\Lambda]$ , act\_state\_$[\lambda]$
		\ENDFOR
		\STATE create subproblem $\rho_{new} \leftarrow $ SubProblem(polyhedron, act\_state\_$[\Lambda]$ , act\_state\_$[\lambda]$)
		\STATE $\rho_{new}.\tilde{l} \leftarrow \rho.\tilde{l}$ 
		\STATE $\rho_{new}.$star\_neurons $\leftarrow$ $\rho.$star\_neurons $\backslash$ known decided\_neurons 
		\STATE $\rho_{new}$ $\leftarrow$ $\rho_{new}$.FFilterProp(weights, $\alpha$) //update the activation pattern and get $\tilde{l}$
		\STATE $L_{ub} \leftarrow$ LipschitzBounds($\rho_{new}$, $( \widetilde{W}, \widetilde{b}))$ //see section \ref{sec: lipBounds}
		\STATE new\_subproblems $\leftarrow$ new\_subproblems.append($\rho_{new}$)
		\IF{$\rho_{new}.\tilde{l}==L+1$}
		\STATE //no $\star$-neurons, one linear region
		\STATE glb $\leftarrow \max(\text{glb}, L_{ub}$)				
		\ENDIF
		\ENDFOR 
		\RETURN{new\_subproblems, glb} 
	\end{algorithmic}
	\label{alg: branch}	
\end{algorithm}

\begin{algorithm}
	\caption{Main Algorithm}
	\hspace*{\algorithmicindent} \textbf{Input:}
	\begin{itemize}
		\item Network $f = \alpha^L\circ \mathfrak{W}^L\circ\dots\alpha^1\circ\mathfrak{W}^1 $ (list of alternating tuples ($W^l$,$w^l$) and instances of PWL $\alpha^l$)
		\item input domain $\Omega$ (polyhedron)
		\item approximation factor $k$ (float)
		\item $p$, deciding which $p$-norm to use
	\end{itemize}
	\hspace*{\algorithmicindent} \textbf{Output: }
	\begin{itemize}
		\item (glb, gub) tuple of greatest lower bound and greatest upper bound for $L(f, \Omega)$ where gub $\leq k\cdot$ glb
	\end{itemize}	
	\begin{algorithmic}[1]
		\STATE initialize constraints $H \leftarrow \Omega \subset\mathbb{R}^{n_0}$ //(polyhedron)
		\STATE weights\_list, act\_list $\leftarrow f$ //decompose $f$ into weights and activations lists 
		\STATE initial activation pattern ($[\Lambda]$, $[\lambda]$, $\tilde{l}$, star\_neurons) $\leftarrow $ SymProp($\Omega$, weights\_list, act\_list)
		\STATE $\rho_I \leftarrow$ SubProblem($H, [\Lambda],  [\lambda]$)  // instance of SubProblem class
		\STATE $\rho_I.\tilde{l} \leftarrow \tilde{l}$
		\STATE $\rho_I.$star\_neurons $\leftarrow$ star\_neurons
		\STATE $\rho_I.$propagated\_weights $\leftarrow$ LinProp(Id, $\textbf{0}$, weights\_list, $[\Lambda],  [\lambda]$, start=0, end =$\rho_I.\tilde{l}$)
		\STATE $\rho_I.L_{ub} \leftarrow$ LipschitzBounds($\rho_I$, weights\_list, $p$) //see seection \ref{sec: lipBounds}
		\STATE gub, glb $\leftarrow (\rho_I.L_{ub}, 0)$ // initialize upper and lower bound on Lipschitz constant
		\STATE initialize: $\varrho \leftarrow $ maxHeap($[\rho_I]$) //(initialize maxheap of subproblems)
		\IF{$\tilde{l}$ == L}
		\STATE //trivial case
		\STATE glb $\leftarrow \rho_I.L_{ub}$
		\ENDIF
		\WHILE{gub $> k\cdot $ glb}
		\STATE $\rho' \leftarrow \arg\max_{\rho\in\varrho}(\rho.L_{ub})$
		\STATE $\varrho \leftarrow \varrho\backslash\{\rho'\}$
		\STATE (new\_subproblems, glb) $\leftarrow$ Branch($\rho'$, glb, weights\_list, act\_list, pnorm)
		\FOR{$\tilde{\rho}$ in new\_subproblems}
		\IF{$\tilde{\rho}.L_{ub}>glb$}
		\STATE $\varrho \leftarrow (\varrho \; \cup \{\tilde{\rho}\} )$
		\ENDIF
		\ENDFOR
		\STATE gub $\leftarrow \max_{\rho\in\varrho} \rho.L_{ub}$
		\ENDWHILE
		\RETURN (glb, gub)
	\end{algorithmic}
	\label{alg: main}	
\end{algorithm}

\end{document}